\newtheorem*{theorem*}{Theorem}
\newtheorem{theorem}{Theorem}
\newtheorem{lemma}{Lemma}
\newtheorem*{corollary*}{Corollary}
\def\cS{{\mathcal{S}}}
\def\cD{{\mathcal{D}}}
\def\cA{{\mathcal{A}}}
\def\cT{{\mathcal{T}}}
\def\VV{{\mathbb{V}}}
\def\EE{{\mathbb{E}}}
\def\hw{{\widehat{w}}}
\newtheorem{definition}{Definition}
\begin{document}

\title{Towards Robust Offline-to-Online Reinforcement Learning via Uncertainty and Smoothness}

\author{\name Xiaoyu Wen
        \email wenxiaoyu@mail.nwpu.edu.cn \\ 
        \addr Northwestern Polytechnical University \\ Xi'an, Shaanxi, China
        \AND
        \name Xudong Yu \email hit20byu@gmail.com \\
        \addr               Harbin Institute of Technology\\ Harbin, Heilongjiang, China
        \AND
        \name Rui Yang \email ryangam@connect.ust.hk \\
        \addr The Hong Kong University of Science and Technology\\ HongKong, China
        \AND
        \name Haoyuan Chen \email chen\_hy@mail.nwpu.edu.cn \\
        \addr Northwestern Polytechnical University \\ Xi'an, Shaanxi, China
        \AND
        \name Chenjia Bai
        \email baichenjia@pjlab.org.cn \\
        \emph{(Corresponding author)} \\
        \addr 
              Shanghai Artificial Intelligence Laboratory\\ Shanghai, China \\
              Shenzhen Research Institute of Northwestern Polytechnical University\\ Shenzhen, GuangDong, China
        \AND
        \name Zhen Wang
        \email w-zhen@nwpu.edu.cn \\
        \emph{(Corresponding author)} \\
        \addr
              Northwestern Polytechnical University \\ Xi'an, Shaanxi, China}


\maketitle

\begin{abstract}
To obtain a near-optimal policy with fewer interactions in Reinforcement Learning (RL), a promising approach involves the combination of offline RL, which enhances sample efficiency by leveraging offline datasets, and online RL, which explores informative transitions by interacting with the environment. Offline-to-Online RL provides a paradigm for improving an offline-trained agent within limited online interactions. However, due to the significant distribution shift between online experiences and offline data, most offline RL algorithms suffer from performance drops and fail to achieve stable policy improvement in offline-to-online adaptation. To address this problem, we propose the Robust Offline-to-Online (RO2O) algorithm, designed to enhance offline policies through uncertainty and smoothness, and to mitigate the performance drop in online adaptation. Specifically, RO2O incorporates Q-ensemble for uncertainty penalty and adversarial samples for policy and value smoothness, which enable RO2O to maintain a consistent learning procedure in online adaptation without requiring special changes to the learning objective. Theoretical analyses in linear MDPs demonstrate that the uncertainty and smoothness lead to tighter optimality bound in offline-to-online against distribution shift. Experimental results illustrate the superiority of RO2O in facilitating stable offline-to-online learning and achieving significant improvement with limited online interactions.
\end{abstract}

\section{Introduction}

Reinforcement learning (RL) has demonstrated remarkable success in tackling complex tasks, such as playing games \shortcite{rainbow,alphazero,dota} and controlling robots \shortcite{TRPO,ppo,sac} in recent years. Nonetheless, persistent critiques point to its limited adaptability in real-world scenarios. The efficacy of RL critically hinges upon access to an unbiased interactive environment and millions of unrestricted trial-and-error attempts \shortcite{natureDQN}. However, domains such as healthcare \shortcite{healthcare} and autonomous driving \shortcite{autodriving} often present challenges in online data collection due to safety, feasibility, and financial reasons.

Offline RL presents a distinctive advantage over online RL, as it enables the learning of policies directly from a fixed dataset collected by a behavior policy \shortcite{BatchRL,BCQ,BARC}. These datasets can be sourced from historical logs, demonstrations, or expert knowledge, furnishing valuable information to facilitate learning without the need for costly online data collection. However, the performance of current offline RL methods heavily relies on the coverage of the state-action space and the quality of stored trajectories \shortcite{dataset}. Furthermore, the lack of exploration hampers the agent's ability to discover superior policies \shortcite{challenges}. To address this issue, numerous studies focus on enhancing pre-trained offline agents through limited online interactions, known as Offline-to-Online RL \shortcite{awac,balancedreplay,iql}. This paradigm aims to rectify estimation bias, which remains unaddressed during offline training, and leads to further policy improvement through several online fine-tuning steps.

Despite the potential to integrate offline datasets and online experiences to optimize the agent, existing offline-to-online learning methods suffer from performance drops and struggle to efficiently improve policies, which hinders their applicability in real-world scenarios. At the initial stage of online fine-tuning, the agent's performance may heavily decline due to the distributional shift between offline datasets and online transitions \shortcite{awac,jsrl}. Moreover, the inclusion of low-quality data can have detrimental effects on performance and lead to skewed optimization. Prior efforts to address this issue involve altering the policy extraction procedure \shortcite{awac,iql}, incorporating behavior cloning regularization \shortcite{adaptivebc}, modifying data sampling methods \shortcite{balancedreplay,MOOSE}, or proposing policy expansion sets \shortcite{pex}. While these methods have made progress in mitigating performance drops, they still suffer form limited performance improvement due to the lack of effective mechanisms to enhance performance during the fine-tuning phase.

In this paper, we propose the Robust Offline-to-Online (RO2O) algorithm for RL, designed to address the distribution shift in the offline-to-online process and achieve efficient policy improvement during the fine-tuning phase. To achieve this, RO2O utilizes $Q$-ensembles to learn robust value functions, resulting in no performance drop during the initial stage of online fine-tuning. Additionally, RO2O incorporates the smoothness regularization of policies and value functions on out-of-distribution (OOD) states and actions ensuring robust performance even when the interacting trajectories in the training buffer deviate significantly from offline policies. Notably, RO2O offers the advantage of not requiring the transformation of the learning algorithm \shortcite{e2o} or policy composition \shortcite{pex} throughout the process. From a theoretical perspective, we prove that under the linear MDP assumption, the uncertainty and smoothness lead to a tighter optimality bound in offline-to-online against distribution shift. Empirical results showcase the favorable performance of RO2O during both offline pre-training and online fine-tuning. Compared to baseline algorithms, RO2O achieves efficient policy improvement without the need for specific explorations or modifications to the learning architecture. The code is available in this repository (\url{https://github.com/BattleWen/RO2O}).

\section{Related Work}

\paragraph{Offline-to-Online RL} A key challenge in offline-to-online process is the performance drop experienced at the initial stage, attributed to the distributional shift between offline data and online experiences. Previous approaches have attempted to address this issue by altering policy extraction \shortcite{awac,iql}, adjusting sampling methods \shortcite{balancedreplay}, expanding policy sets \shortcite{pex}, and modifying $Q$-function learning targets \shortcite{cal-ql}. However, these methods cannot consistently achieve effective policy improvement within the limited fine-tuning steps. Recently, ensembles have been incorporated for both pessimistic learning during offline training and optimistic exploration during online learning \shortcite{e2o}. While such an ensemble method improves the Offline-to-Online performance, it requires careful modifications of learning objectives when transferring the policy from offline to online.
In contrast, our work handles offline training and online fine-tuning in a consistent manner without algorithmic modifications. The proposed approach not only achieves better offline performance but also enables efficient policy improvement during online fine-tuning.


\paragraph{Ensembles in RL} Ensemble methods in RL have emerged as a powerful approach to improve the stability and performance of learning algorithms. In online RL, ensembles are utilized to capture epistemic uncertainty and improve exploration \shortcite{bootstrappedDQN,ucbexploration}. Recent methods also employ ensembles to mitigate estimation bias during Bellman updates \shortcite{td3,maxminQ} or enhance sample efficiency \shortcite{redq}. In the context of offline RL, ensembles are employed in both model-free methods \shortcite{pbrl,SAC-N} and model-based methods \shortcite{mopo,MOOSE} to characterize the uncertainty of $Q$-values or dynamics models. Notably, several works \shortcite{SAC-N,msg} estimate lower confidence bounds of $Q$-functions using ensembles, where EDAC \shortcite{SAC-N} primarily focuses on improving sample efficiency with gradient diversity and MSG \shortcite{msg} mainly emphasizes the importance of ensemble independence for effectively estimating uncertainty. Our approach extends upon these methodologies by incorporating a perturbed sample set, which differs from solely estimating uncertainty within the existing state-action space of the dataset. We primarily use ensembles to penalize the $Q$-values of the OOD samples and apply smooth regularization to ensure that the policies and $Q$-values of in-sample data and perturbed samples do not deviate too much. In this way, we can smooth them out within a small range beyond the dataset’s state space, resulting in more robust estimates.

\paragraph{Robustness in RL} Robustness has gained paramount importance in RL to ensure the reliability and stability of RL agents in diverse and challenging environments. In online RL, previous research has explored techniques such as domain randomization \shortcite{domainrandom}, policy smoothing \shortcite{sr2l}, and data augmentation methods \shortcite{s4rl} to improve performance.  Recently, an offline RL algorithm \shortcite{rorl} incorporates policy and value smoothing for OOD states, highlighting the significance of robustness in offline RL agents. These approaches typically focus on enhancing robustness against adversarial perturbations on observations or actions and validate their effectiveness through the synthesis of noisy data. In contrast, our focus is on the robustness of models to handle the distributional shift specifically in the Offline-to-Online RL setting.

\section{Preliminaries}

\paragraph{Offline-to-Online RL} The RL problem is typically formulated as Markov Decision Process (MDP), represented by the tuple $\mathcal{M}=(\mathcal{S}, \mathcal{A}, P, R,\gamma)$. In this framework, the agent's decision-making process is guided by a policy denoted as $\pi$, which maps environmental states $s \in \mathcal{S}$ to actions $a \in \mathcal{A}$. The agent's objective is to find an optimal policy, denoted as $\pi^*$, that maximizes the expected cumulative reward over time. For a policy $\pi$, the state-action value function, denoted as $Q^\pi(s, a)$, represents the expected cumulative reward starting from state $s$, taking action $a$, and following policy $\pi$ thereafter. The learning target for the value function in online RL, also referred to as the Bellman operator, can be expressed as: $$\mathcal{T}Q(s,a)=r(s,a) + \gamma \mathbb{E}_{s'\sim P(\cdot|s,a), a'\in \pi(\cdot|s')} Q(s',a').$$

In offline RL, learning is performed using a fixed dataset $\mathcal{D}=\{s_i, a_i, r_i, s'_i\}_{i=1}^n$ of historical interactions sampled from a behavior policy $\mu$. A key challenge in offline RL is the bootstrapped error caused by the distributional shift between behavior policies and learned policies. To mitigate the distributional shift, previous methods \shortcite{uncertainty,pbrl,rorl} leverage $Q$-ensembles to capture epistemic uncertainty and penalize $Q$-values with large uncertainties. When we estimate the empirical expectation from the dataset $\cD$, the Bellman operator becomes 
$\widehat{\cT}Q(s,a)=r(s,a) + \gamma \widehat{\EE}_{s'\sim P(\cdot|s,a), a'\sim \pi(\cdot|s')}(Q(s',a')-\alpha U(s',a'))$, where $U(s',a')$ denotes the estimated uncertainties, and $\alpha$ is used to adjust the degree of pessimism. Additionally, RORL \shortcite{rorl} employs smooth regularization on the policy and the value function for states near the dataset.  

Despite the advantage of leveraging large-scale offline data, the performance of pre-trained agents is often limited by the optimality and coverage of the datasets. Overestimation of value functions cannot be substantially corrected without interactions with the environment. To address this limitation, our work focuses on offline-to-online learning, aiming to improve agents by incorporating limited online interactions.

\paragraph{Linear MDPs}
Our theoretical derivations build on top of linear MDP assumptions. Least Squares Value Iteration (LSVI) \shortcite{lsvi-2020} is a classic method frequently used in the linear MDPs to calculate the closed-form solution. In linear MDPs \shortcite{lsvi-2020}, the transition dynamics and reward function take the following form, as
\begin{equation}
\nonumber
    \mathbb{P}_t(s_{t+1} \,|\, s_t, a_t) = \langle \varphi(s_{t+1}), \phi(s_t, a_t) \rangle, 
    \quad r(s_t, a_t) = \upsilon^\top \phi(s_t, a_t), \quad \forall(s_{t+1}, a_t, s_t)\in\mathcal{S}\times\mathcal{A}\times\mathcal{S},
\end{equation}
where the feature embedding $\phi: \mathcal{S}\times\mathcal{A}\mapsto \mathbb{R}^d$ is known and $\varphi$ is an unknown measures over $\mathcal{S}$. We further assume that the reward function $r:\mathcal{S}\times\mathcal{A}\mapsto[0, 1]$ is bounded, the feature is bounded by $\|\phi\|_2 \leq 1$ and $v$ is an unknown vector. We consider the settings of $\gamma=1$ in the following. Then for any policy $\pi$, the state-action value function is also linear to $\phi$, as
\begin{equation}
\nonumber
Q^{\pi}(s_t,a_t)=w^{\top} \phi(s_t, a_t).
\end{equation}
Given data $\mathcal{D}_m=\{s_t^i,a_t^i,r_t^i,s_{t+1}^i\}_{i\in[m]}$, the parameter of the $w$ can be solved via LSVI algorithm, as
\begin{equation}
\label{eq::appendix_OOD_LSVI}
\widehat w_t = \min_{w\in \mathbb{R}^d} \sum^m_{i = 1}\bigl(\phi(s^i_t, a^i_t)^\top w - r(s^i_t, a^i_t)- V_{t+1}(s^i_{t+1})\bigr)^2 
\end{equation}
where $V_{t+1}$ is the estimated value function in the $(t+1)$-th step. Following LSVI, the explicit solution to Equation \eqref{eq::appendix_OOD_LSVI} takes the form of
\begin{equation}
\nonumber
    \widehat w_t = \Lambda^{-1}_t\sum^m_{i = 1}\phi(s^i_t, a^i_t)y_t^i,
    \qquad {\rm where~}\Lambda_t = \sum^m_{i=1}\phi(s^i_t, a^i_t)\phi(s^i_t, a^i_t)^\top
\end{equation}
is the feature covariance matrix of the state-action pairs in the offline dataset, and $y_t^i=r(s^i_t, a^i_t)+ V_{t+1}(s^i_{t+1})$ is the Bellman target in regression.


\section{Methodology}

In this section, we present our methodology for addressing the challenges posed by the Offline-to-Online RL setting. The most significant challenge in this context is effectively transferring knowledge from the static dataset to cope with distributional shift in the dynamic online environment. 
To tackle this crucial issue, we propose the RO2O algorithm, a novel approach that combines $Q$-ensembles and robustness regularization. We begin by providing a motivating example to illustrate that current methods struggle to handle a large distributional shift effectively. Subsequently, we introduce our algorithm, which maintains a consistent architecture in both offline and online learning phases. Furthermore, we establish theoretical support for our approach.

\subsection{Motivating Example}
Offline-to-Online RL methods face challenges arising from distribution shift not only between learned policies and behavior policies but also between offline data and online transitions during the fine-tuning process. Robust performance is expected from offline algorithms despite the presence of online trajectories that deviate from the learned offline policies. To investigate this, we evaluate two state-of-the-art offline methods, i.e., CQL and IQL, with distinct distribution shifts to simulate the offline-to-online process. Specifically, we pre-train the agents using the halfcheetah-expert dataset from D4RL \shortcite{d4rl} benchmark, and inject synthetic distributional shift similar to that in online fine-tuning process to assess their robustness. The synthetic distributional shift is incorporated by adding samples from a different offline dataset, such as the halfcheetah-medium dataset. As depicted in the left panel of Figure~\ref{fig:motivation}, noticeable discrepancies in the trajectory distribution exist between the two datasets, indicating the presence of distributional shift.
 
We compare the performance of CQL, IQL, and our method to figure out whether the state-of-the-art methods can handle the synthetic distributional shift during fine-tuning. The experimental results are shown in the right panel of Figure~\ref{fig:motivation}. Our findings reveal that all methods experience a performance drop at the initial stage due to the significant distributional shift. However, in comparison to CQL and IQL, our method exhibits a milder degradation in performance. Moreover, CQL and IQL fail to recover from the deviation during the fine-tuning phase with a new dataset. This  inability is attributed to the presence of samples that deviate significantly from the region covered by the current policies, which affects the learning of policies and value functions. In contrast, our method demonstrates superior robustness, enabling effective policy improvement even in the presence of significant distributional shifts. As previously mentioned, it is expected to correct estimation bias and improve pre-trained policies within limited online interactions, while traditional methods struggle to accomplish this.

\begin{figure}[!t]
\centering
\includegraphics[width=0.33\textwidth]{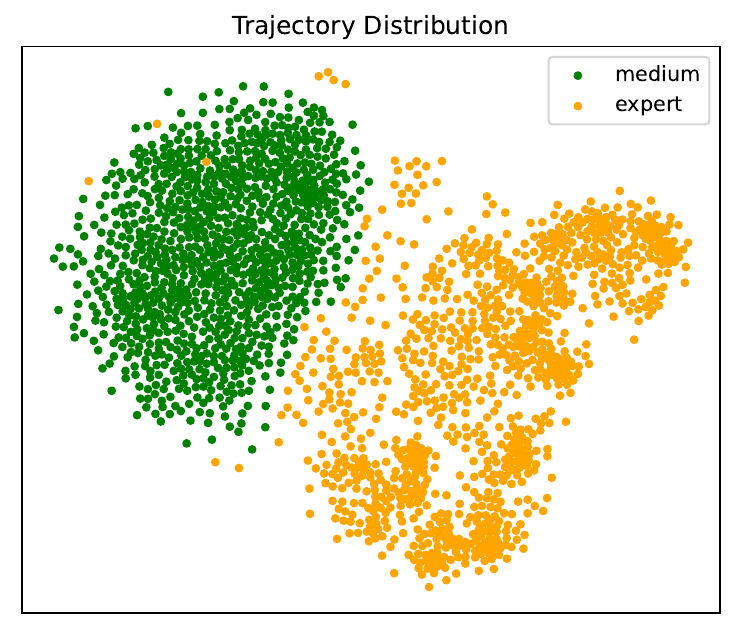}
\includegraphics[width=0.33\textwidth]{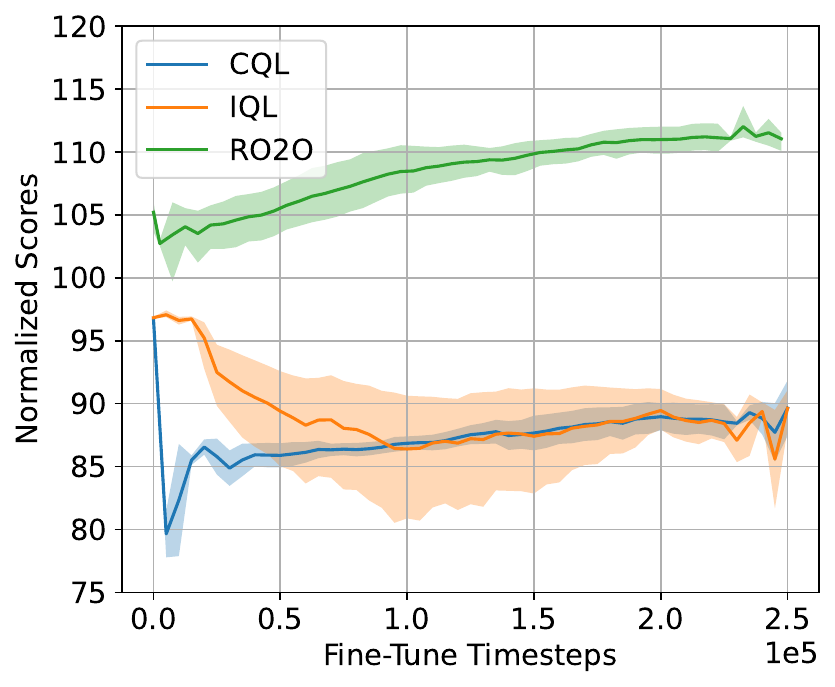}
\caption{Illustration for the motivating example. In the left panel, we visualize the trajectory distribution of two datasets, by mapping the trajectories into two-dimensional points using T-SNE \shortcite{tsne}. The right panel presents the fine-tuning performance.
}
\label{fig:motivation}
\end{figure}

\subsection{Algorithm}

\begin{figure*}[ht]
\centering
\includegraphics[width=\textwidth]{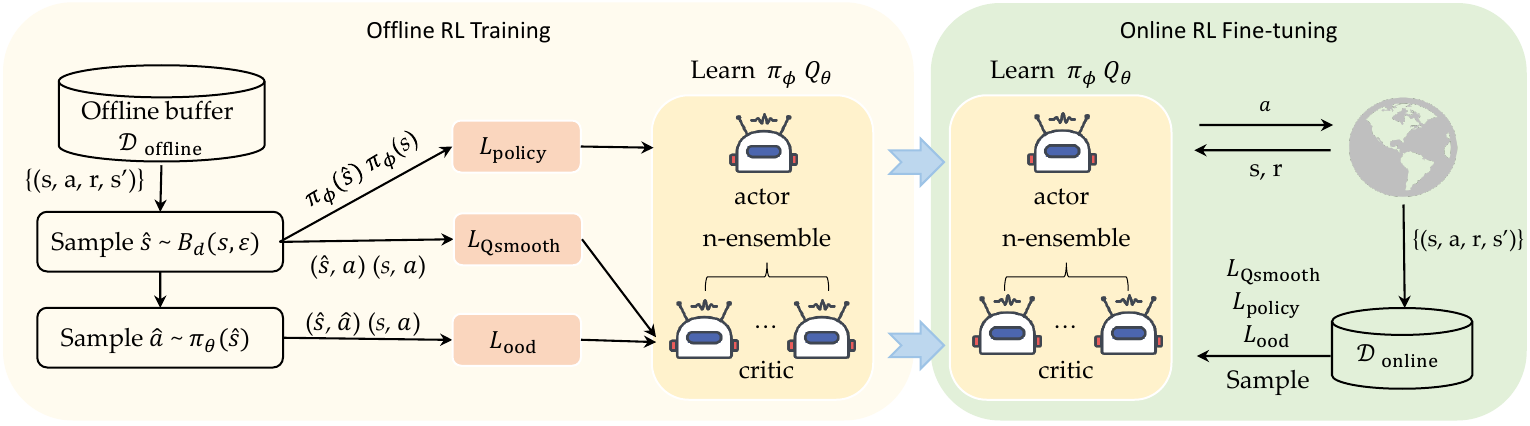}
\caption{\textbf{Overall framework of RO2O.} RO2O employs the same off-policy RL algorithms during the offline-to-online training phase. By using OOD sampling, we incorporate $\mathcal{L}_{\rm{ood}}$ and $\mathcal{L}_{\rm{Qsmooth}}$ into the training process for the gradient update, while also calculating $\mathcal{L}_{\text {policy}}$ to constrain the policy $\pi_{\zeta}(\hat{s})$ as close as possible to the current policy $\pi_{\zeta}(s)$.}
\label{fig-framework}
\end{figure*}

Based on the motivating example, we suggest that it is important to design a robust algorithm capable of ensuring stable policy improvement with online interactions. To this end, we propose the RO2O method, which incorporates ensembles and robustness regularization into the offline-to-online learning process. Notably, our method stands out from other existing approaches, such as PEX \shortcite{pex} and E2O \shortcite{e2o}, due to its unique characteristics. Unlike these methods, our approach does not require any changes to the learning algorithm or the need to conduct policy expansion when transitioning to the fine-tuning phase.

\subsubsection{Ensemble-based Learning} 
In our method, we adopt $Q$-ensemble with $N$ networks in both offline pre-training and online fine-tuning, employing the same update procedure. These ensemble networks possess identical architecture and are initialized independently. While prior studies \shortcite{redq,e2o} suggest that randomly selecting two of the $N$ ensembles is effective during the online learning phase, our findings indicate that choosing the \emph{minimum} of ensemble $Q$-functions is sufficient to achieve favorable performance. Moreover, this choice remains consistent with the offline learning process, where pessimism is necessary to counteract overestimation bias. Formally, the TD target when using the minimum of ensemble $Q$-functions can be expressed as:
\begin{equation}
\widehat{\cT} Q_{\theta_i}(s,a) = r(s,a) + \gamma \widehat{\EE}_{s'
} \min_{i} Q_{\theta^-_i}(s', a'), i\in[1,N],
\label{eq:shared}
\end{equation}
where $\theta_i$ and $\theta^-_i$ are parameters for $i$-th $Q$-network and target $Q$-network, respectively, and $a' \sim \pi(s')$. Additionally, we notice that the results reported in \cite{msg,SAC-N} demonstrate that using shared targets has a prior performance in Mujoco tasks but fails in more challenging domains such as AntMaze. Because shared targets on AntMaze tend to be overly pessimistic, it's harder to explore new out-of-distribution samples, making it more difficult to learn better policies. Thus, for the challenging AntMaze environments, we adhere to previous work and utilize \emph{independent} Bellman targets (refer to Appendix \ref{more discussion} for more details about the theoretical evaluation) without altering the network architecture. Similarly, independent targets can be formulated as:
\begin{equation}
\widehat{\cT} Q_{\theta_i}(s,a) = r(s,a) +  \gamma\widehat{\EE}_{s'
} Q_{\theta^-_i}(s', a'), i\in[1,N].
\label{eq:independent}
\end{equation}

During offline training, the $Q$-ensembles are utilized to learn the value function and update the policy. In online fine-tuning, the learned $Q$-ensembles and policies continue to be updated with online experiences, as shown in Figure~\ref{fig-framework}. Several online RL methods \shortcite{redq,sunrise} also employ $Q$-ensembles and suggest maximizing the average $Q$-values for policy optimization. In our study, we have found that maximizing the minimum $Q$-values, consistent with the objective in the offline phase, is also highly effective in obtaining the optimal policy during online fine-tuning. 

\subsubsection{Robustness Regularization} 
Similar to previous studies \shortcite{s4rl,sr2l} that consider robustness in RL, our goal is to enhance the robustness of offline-trained value function and policy, which can have large estimation bias caused by the distribution shift in the fine-tuning phase. Different from previous offline RL approaches that mainly mitigate the effect of perturbed actions \shortcite{adaptivebc,cal-ql}, we adopt a different robustness perspective by suggesting that the distributional shift in offline-to-online process brings both OOD states and actions.

Intuitively, the offline pre-trained policy inevitably encounters OOD samples during the offline-to-online process. Since the process of online exploration is based on the rollout of the offline policy, these OOD samples tend to be distributed around the offline data. In order to deal with the problem of distribution shift, we attempt to introduce additional adversarial samples for smoothness and uncertainty estimation. In this way, we can guarantee that the policies and $Q$-values of in-sample data and perturbed samples do not deviate too much and smooth out them within a small range beyond the dataset's state space, thereby leading to smooth value function and policy that are robust to distribution shift. To this end, we follow a similar approach as in RORL \shortcite{rorl} to construct adversarial samples for regularization in the offline-to-online process.

\paragraph{Smoothness Regularization} We employ regularization on both policy and $Q$-function by minimizing the difference between estimations obtained from in-sample data and perturbed samples. Specifically, we synthesize perturbed samples by constructing a perturbation set $\mathbb{B}_d(s,\epsilon)=\{\hat{s}:d(s,\hat{s})\leq \epsilon\}$ for state $s$, which is an $\epsilon$-radius ball with a distance metric $d(\cdot,\cdot)$. By sampling from this set $\hat{s}\in \mathbb{B}_d(s,\epsilon)$, the proposed RO2O minimizes the difference between $Q_{\theta_i}(s,a)$ and $Q_{\theta_i}(\hat{s},a)$, as
\begin{equation}
\nonumber
\mathcal{L}^i_{\rm{Qsmooth}}=\mathcal{L} \left(Q_{\theta_i}(s,a), Q_{\theta_i}(\hat{s},a)\right),
\end{equation}
which enforces value smoothness to adversarial state $\hat{s}$, and $\mathcal{L}$ can be a $L_2$ distance. The smooth loss should be applied to each network in the ensemble. To simplify the optimization, we choose to minimize the maximal smooth loss $\max\nolimits_{i} \mathcal{L}^i_{\rm{Qsmooth}}$ among the ensemble. Similarly, we can minimize the difference between $\pi(a|s)$ and $\pi(a|\hat{s})$, which is realized by minimizing the Jensen-Shannon divergence $D_{\rm JS}(\pi(\cdot|s)\|\pi(\cdot|\hat{s}))$. The JS divergence $D_{\rm JS}(\pi(\cdot|s) | \pi(\cdot|\hat{s}))$ is defined as: $D_{\rm JS}(\pi(\cdot|s)||\pi(\cdot|\hat{s})) = \frac{1}{2}D_{\rm KL}(\pi(\cdot|s)||M) + \frac{1}{2}D_{\rm KL}(\pi(\cdot|\hat{s})||M),$ where $M$ is the mixture distribution of $\pi(\cdot|s)$ and $\pi(\cdot|\hat{s})$, given by $M=\frac{1}{2}(\pi(\cdot|\hat{s}) + \pi(\cdot|s))$.


\paragraph{Overestimation Penalty} Meanwhile, since the $Q$-values for OOD states and actions can be overestimated, we penalize their value estimation with uncertainty quantification following prior works \shortcite{pbrl,rorl}. For OOD states $\hat{s}\in \mathbb{B}_d(s,\epsilon)$ and OOD actions $\hat{a}\sim \pi(\hat{s})$, their pseudo Bellman targets can be expressed as $\widehat{\mathcal{T}}^{\rm ood}Q(\hat{s},\hat{a})=Q(\hat{s},\hat{a}) - \alpha U(\hat{s},\hat{a})$. Here, $\theta_i$ denotes the parameters of the $i$-th Q-function. We define a loss function to constrain the value of OOD samples as:
\begin{equation}
\nonumber
\mathcal{L}^i_{\rm{ood}}=\mathbb{E}_{\hat{s}\in \mathbb{B}_d(s,\epsilon), \hat{a}\sim \pi(\hat{s})} (\widehat{\mathcal{T}}^{\rm ood} Q_{\theta_i}(\hat{s},\hat{a}) - Q_{\theta_i}(\hat{s},\hat{a}))^2.
\end{equation}

Specifically, we define the uncertainty function $U(\hat{s}, \hat{a})$ as follows: $$U(\hat{s}, \hat{a}) = \sqrt{\frac{1}{K} \sum^K_{k=1} (Q_{\theta_i}(\hat{s},\hat{a})-\bar{Q}_{\theta_i}(\hat{s},\hat{a}))^2},$$ where $K$ is the number of the ensemble networks and $\bar{Q}_{\theta_i}(\hat{s},\hat{a})$ means the mean $Q$-value of the ensemble networks.

\subsubsection{Algorithm Description} 
As outlined in Algorithm~\ref{alg:algorithm}, the learning process encompasses two phases: offline pre-training and online fine-tuning. We adopt the SAC \shortcite{sac} algorithm as our backbone. For $Q$-value functions, RO2O has the following loss function: 
\begin{equation}
    \mathcal{L}^i_Q = \mathbb{E}_{(s,a,r,s')\sim \cD} \left[ \mathcal{L}^i_{\rm{TD}} + \eta_1 \mathcal{L}^i_{\rm{Qsmooth}} + \eta_2 \mathcal{L}^i_{\rm{ood}}\right],
    \label{eq:lq}
\end{equation}
where $\cD=\cD_{\rm offline}$ in the offline training phase and $\cD=\cD_{\rm offline}\cup \cD_{\rm online}$ in the online fine-tuning phase. $\mathcal{L}^i_{\rm{TD}}=(\mathcal{T}Q_{\theta_i}(s,a)-Q_{\theta_i}(s,a))^2$ represents the TD error, where $\mathcal{T}Q_{\theta_i}=r+\gamma \left(\min Q_{\theta_i^-}(s',a')-\beta \log \pi(a'|s')\right)$ when taking shared targets in Equation~\eqref{eq:shared} for Mujoco tasks, and $\mathcal{T}Q_{\theta_i}=r+\gamma \left(Q_{\theta_i^-}(s',a')-\beta \log \pi(a'|s')\right)$ when using independent targets in Equation~\eqref{eq:independent} for AntMaze tasks. The policy is learned by optimizing the following loss function:
\begin{equation}
\begin{aligned}
    \mathcal{L}_\pi &= \mathbb{E}_{(s,a)\sim \cD}\bigl[\min\limits_{i} Q_{\theta_i}(s,a) + \beta \log \pi_{\zeta}(a|s) \bigr. \bigl. + \eta_3 D_{\rm JS}(\pi_{\zeta}(\cdot|s)\|\pi_{\zeta}(\cdot|\hat{s}))\bigr],
     \label{eq:lp}
\end{aligned}
\end{equation}
where $\phi$ represents the parameters of the policy network. 
In Equation~\eqref{eq:lp}, the first term maximizes the minimum of the ensemble $Q$-functions to obtain a conservative policy, the second term is the entropy regularization, and the third term is the smooth constraint. We remark that we maintain the same loss function throughout the offline-to-online process, which is more elegant than previous offline-to-online methods. The difference between the pre-training and the fine-tuning phase lies in the data sampled to estimate of the expectations in Equation~\eqref{eq:lq} and Equation~\eqref{eq:lp}. In implementation, we also apply normalization to states, which is widely used in previous work \shortcite{td3bc,stablebaselines3}. This also helps to deal with the variations of states in the fine-tuning phase. 

\begin{algorithm}[tb]
\caption{Robust Offline-to-Online RL algorithm}
\label{alg:algorithm}
\textbf{Require:} ensemble $Q$-networks $\{Q_{\theta_i}\}_{i=1}^n$, target networks $\{Q_{\theta^-_i}\}_{i=1}^n$, and policy network $\pi_\phi$
\begin{algorithmic}[1] 
\STATE \textit{// Offline Pre-training}
\WHILE{$t\leq T_1$} 
\STATE Sample mini batches from $\mathcal{D}$.
\STATE Calculate robustness regularization $\mathcal{L}^i_{\rm{Qsmooth}}, \mathcal{L}^i_{\rm{ood}}$.
\STATE Update $Q$-functions with Equation~\eqref{eq:lq} and update target networks softly.
\STATE Update the policy with Equation~\eqref{eq:lp}.
\ENDWHILE
\STATE \textit{// Online Fine-tuning}
\WHILE{$t\leq T_2$}
\STATE Interact with the online environment with $\pi_\phi$.
\STATE Collect transitions into new buffer $\mathcal{B}$.
\STATE Sample batches from buffer $\mathcal{B}$.
\STATE Update $Q$-functions and the policy with $\mathcal{L}^i_{Q}, \mathcal{L}_{\pi}$.
\ENDWHILE
\end{algorithmic}
\end{algorithm}

\subsection{Theoretical Analysis}

Our analyses are conducted in linear MDP assumption \shortcite{lsvi-2020,jin2021pessimism}, where the transition kernel and the reward function are linear in a given state-action feature $\phi(s,a)$. We estimate the value function by $Q(s,a)\approx \hw^{\top}\phi(s,a)$. See the appendix for the details. 

We start by considering the offline training phase where the value function is learned from $\cD_{\rm offline}$. According to the loss in Equation~\eqref{eq:lq}, the parameter $\hw$ can be solved by 
\begin{equation}
\label{eq:simplified_problem}
\begin{aligned}
\widetilde w_{\rm offline} =  \min_{w\in \mathcal{R}^d}  &\Big[\sum_{i=1}^{m} \big(y_t^i-Q_{w}(s_t^i,a_t^i)\big)^2 + \sum_{(\hat s, \hat a, \hat y) \sim \mathcal{D}_{\rm{ood}} } \big(\hat y - Q_{w}(\hat s,\hat a)\big)^2 \\ & + \sum_{i=1}^{m} \frac{1}{|\mathbb{B}_d(s_t^i,\epsilon)|} \sum_{(\hat{s}_t^i,s_t^i,a_t^i)\in \mathcal{D}_{\rm{robust}}}  \big(Q_{w}(s_t^i,a_t^i) - Q_{w}(\hat{s}_t^i,a_t^i)\big)^2\Big],
\end{aligned}
\end{equation}
where we denote $y=\widehat{\cT}Q$ and $\hat{y}=\widehat{\cT}^{\rm ood}Q$ as the learning targets for simplicity. The three terms in Equation~\eqref{eq:simplified_problem} correspond to TD-loss, OOD penalty, and smoothness constraints, respectively. For the clarity of notations, we explicitly define a dataset $\cD_{\rm ood}$ for OOD sampling, and an adversarial dataset $\cD_{\rm robust}$ for the smoothness term. Following Least-Squares Value Iteration (LSVI) \shortcite{lsvi-2020}, the solution of Equation~\eqref{eq:simplified_problem} takes the following form as
\begin{equation}
\label{eq::w_ood_solu}
    \widetilde w_t =\widetilde \Lambda_t^{-1} \Big( \sum_{i=1}^{m} \phi(s_t^i,a_t^i) y_t^i + \sum_{(\hat s, \hat a, \hat y) \sim \mathcal{D}_{\rm{ood}} }\phi(\hat s,\hat a)  \hat y  \Big),
\end{equation}
and the covariance matrix $\widetilde{\Lambda}_t$ is
\begin{equation}
\label{eq:covariance-rorl}
\widetilde \Lambda_t = \widetilde \Lambda^{\rm in}_t + \widetilde \Lambda^{\rm ood}_t+ \widetilde \Lambda^{\rm robust}_t,
\end{equation}
where the first term $\widetilde \Lambda^{\rm in}_t=\sum_{i=1}^{m} \phi(s_t^i,a_t^i)\phi(s_t^i,a_t^i)^\top$ is calculated on in-distribution (i.e., in $\cD_{\rm offline}$) data, the second term is $\widetilde \Lambda^{\rm ood}_t=\sum_{\mathcal{D}_{\rm{ood}}} \phi(\hat s_t,\hat a_t)\phi(\hat s_t,\hat a_t)^\top$ is calculated on OOD samples (i.e., in $\cD_{\rm ood}$), and the third term is calculated on adversarial samples (i.e., in $\cD_{\rm robust}$), as
$\widetilde \Lambda^{\rm robust}_t = \sum_{i=1}^{m} \frac{1}{|\mathbb{B}_d|}  \sum_{\mathcal{D}_{\rm{robust}}} \big[\phi(\hat s,a) - \phi(s,a)\big]\big[\phi(\hat s,a) - \phi(s,a)\big]^\top$.

For comparison, we consider a variant of RO2O without smoothness regularization. Following LSVI, the solution of this variant takes a similar form as Equation~\eqref{eq::w_ood_solu}, but with a different covariance matrix as $\widetilde \Lambda^{\rm in}_t+\widetilde \Lambda^{\rm ood}_t$. The difference in covariance matrices originates from the additional adversarial samples in RO2O. We denote the dataset for RO2O as $\cD_{\rm RO2O}=\cD_{\rm offline}\cup \cD_{\rm ood}\cup \cD_{\rm robust}$, and for this variant as $\cD_{\rm variant}=\cD_{\rm offline}\cup \cD_{\rm ood}$ without smoothness constraints. 

Following the theoretical framework in PEVI \shortcite{jin2021pessimism}, the sub-optimality gap of offline RL algorithms with uncertainty penalty is upper-bounded by the lower-confidence-bound (LCB) term, defined by
\begin{equation}
\nonumber
\Gamma^{\rm lcb}(s_t,a_t;\cD)= \beta_t \big[\phi(s_t,a_t)^\top\Lambda_t^{-1}\phi(s_t,a_t)\big]^{\nicefrac{1}{2}},
\end{equation}
where the form of $\Lambda_t$ depends on the learned dataset (e.g., $\cD_{\rm RO2O}$ or $\cD_{\rm variant}$), and $\beta_t$ is a factor. Then the following theorem shows our smoothness regularization leads to smaller uncertainties for arbitrary state-action pairs, especially for OOD samples (e.g., from online interactions).

\begin{theorem}\label{thm:1}
Assuming that the size of adversarial samples $\mathbb{B}_d(s^i_t,\epsilon)$ is sufficient and the Jacobian matrix of $\phi(s, a)$
has full rank, the smoothness constraint leads to smaller uncertainty for $\forall (s^{\star},a^{\star})\in \cS\times\cA$, as 
\begin{equation}
\nonumber
\Gamma^{\rm lcb}(s^{\star},a^{\star};\cD_{\rm RO2O}) < \Gamma^{\rm lcb}(s^{\star},a^{\star};\cD_{\rm variant}),
\end{equation}
where the covariance matrices for these two LCB terms are $\widetilde \Lambda_t$ in Equation~\eqref{eq:covariance-rorl} and $\widetilde \Lambda^{\rm in}_t+\widetilde \Lambda^{\rm ood}_t$, respectively.
\label{thm:distribution_shift}
\end{theorem}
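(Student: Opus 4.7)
The plan is to reduce the claim to a statement about the Loewner (positive semidefinite) ordering of the covariance matrices. The starting point is the decomposition already written down in Equation~\eqref{eq:covariance-rorl}: $\widetilde{\Lambda}_t = \widetilde{\Lambda}^{\rm in}_t + \widetilde{\Lambda}^{\rm ood}_t + \widetilde{\Lambda}^{\rm robust}_t$. The first two summands are precisely the covariance matrix used by the variant without smoothness, so everything comes down to analyzing the extra summand $\widetilde{\Lambda}^{\rm robust}_t$, which is a sum of outer products $[\phi(\hat s,a)-\phi(s,a)][\phi(\hat s,a)-\phi(s,a)]^\top$ and is therefore automatically positive semidefinite.

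First I would argue that under the hypotheses of the theorem, $\widetilde{\Lambda}^{\rm robust}_t$ is in fact strictly positive definite. The idea is a first-order expansion: for $\hat s \in \mathbb{B}_d(s^i_t,\epsilon)$ with $\epsilon$ small, $\phi(\hat s,a^i_t)-\phi(s^i_t,a^i_t) = J_\phi(s^i_t,a^i_t)(\hat s - s^i_t) + o(\epsilon)$, where $J_\phi$ denotes the Jacobian of $\phi$ in the state argument. The full-rank assumption on $J_\phi$ ensures that $J_\phi(s^i_t,a^i_t)$ is a surjection onto $\mathbb{R}^d$, so letting the perturbations $\hat s - s^i_t$ range over a sufficiently rich subset of $\mathbb{B}_d(s^i_t,\epsilon)$ produces difference vectors whose span is all of $\mathbb{R}^d$. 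The ``size of adversarial samples is sufficient'' hypothesis is exactly what guarantees that we have enough such perturbations, and thus the outer-product sum has strictly positive definite image on $\mathbb{R}^d$. In particular $\widetilde{\Lambda}^{\rm robust}_t \succ 0$.

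Next I would invoke the standard monotonicity of the matrix inverse on the positive definite cone: if $A \succ B \succ 0$ then $B^{-1} \succ A^{-1}$. Since $\widetilde{\Lambda}^{\rm in}_t + \widetilde{\Lambda}^{\rm ood}_t$ is positive definite (it is the covariance used by the variant algorithm, which we assume is invertible, as LSVI requires), adding the strictly positive definite $\widetilde{\Lambda}^{\rm robust}_t$ yields $\widetilde{\Lambda}_t \succ \widetilde{\Lambda}^{\rm in}_t + \widetilde{\Lambda}^{\rm ood}_t$, and hence $\widetilde{\Lambda}_t^{-1} \prec (\widetilde{\Lambda}^{\rm in}_t + \widetilde{\Lambda}^{\rm ood}_t)^{-1}$. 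Evaluating the corresponding quadratic forms at any nonzero $\phi(s^\star,a^\star)$ produces a strict inequality $\phi(s^\star,a^\star)^\top \widetilde{\Lambda}_t^{-1} \phi(s^\star,a^\star) < \phi(s^\star,a^\star)^\top (\widetilde{\Lambda}^{\rm in}_t + \widetilde{\Lambda}^{\rm ood}_t)^{-1} \phi(s^\star,a^\star)$. Taking the square root and multiplying both sides by the common scalar $\beta_t>0$ yields the claimed bound on the LCB bonuses.

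The main obstacle is the first step, establishing strict positive definiteness of $\widetilde{\Lambda}^{\rm robust}_t$. The Loewner-monotonicity argument is textbook, but it only gives a weak inequality unless the robust contribution has full rank in every direction; in particular, a merely positive semidefinite $\widetilde{\Lambda}^{\rm robust}_t$ would only yield a non-strict inequality, and even allow equality in some directions $\phi(s^\star,a^\star)$. Making the Jacobian/full-rank and ``sufficiently many adversarial samples'' hypotheses do this work rigorously is the delicate part of the proof; the remaining algebra is essentially automatic.
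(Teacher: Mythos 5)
Your proposal is correct and follows essentially the same route as the paper: the paper's Lemma~1 likewise establishes strict positive definiteness of $\widetilde\Lambda^{\rm robust}_t$ by arguing that, with sufficiently many perturbed samples and a full-rank Jacobian of $\phi$, the difference vectors $\phi(\hat s,a)-\phi(s,a)$ span $\mathbb{R}^d$, and then concludes the strict inequality between the two quadratic forms. The only cosmetic difference is the final step: you invoke antitonicity of the matrix inverse on the positive definite cone, whereas the paper bounds the generalized Rayleigh quotient $\phi^\top(\widetilde\Lambda^{\rm variant}_t)^{-1}\phi\,/\,\phi^\top\widetilde\Lambda_t^{-1}\phi$ from below by $1+\lambda_{\min}\big(\widetilde\Lambda^{\rm robust}_t(\widetilde\Lambda^{\rm variant}_t)^{-1}\big)>1$ --- two standard phrasings of the same fact --- and, like you, the paper must (and explicitly does) additionally assume that $\widetilde\Lambda^{\rm in}_t+\widetilde\Lambda^{\rm ood}_t$ is positive definite.
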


According to Theorem~\ref{thm:1}, with the additional term $\widetilde \Lambda^{\rm robust}_t = \sum_{i=1}^{m} \frac{1}{|\mathbb{B}_d|}  \sum_{\mathcal{D}_{\rm{robust}}} \big[\phi(\hat s,a) - \phi(s,a)\big]\big[\phi(\hat s,a) - \phi(s,a)\big]^\top$ in the covariance matrix $\widetilde \Lambda_t$ of $\cD_{\rm RO2O}$, the uncertainty of OOD samples 
 measured by UCB will be reduced. As an extreme example in tabular case, the uncertainty for a purely OOD $(s^{\star},a^{\star})$ pair can be large as $\Gamma^{\rm lcb}(s^{\star},a^{\star};\cD_{\rm variant})\to \infty$ without the smoothness term, while $\Gamma^{\rm lcb}(s^{\star},a^{\star}; \cD_{\rm RO2O})\leq \nicefrac{\beta_t}{\sqrt{\lambda}}$ with $\lambda>0$. As a result, RO2O is more robust to significant distribution shift theoretically. See appendix for the proof.


Then, for online fine-tuning with new data from $\cD_{\rm online}$, the following theorem shows RO2O can consistently reduce the sub-optimality gap with online fine-tuning, as 
\begin{theorem}
Under the same conditions as Theorem~\ref{thm:distribution_shift}, with additional online experience in the fine-tuning phase, the sub-optimality gap holds for RO2O in linear MDPs, as
\begin{equation}
\nonumber
\begin{aligned}
{\rm SubOpt} (\pi^*, \widetilde\pi) &\leq \sum\nolimits_{t=1}^{T} \mathbb{E}_{\pi^*} \big[ \Gamma_i^{\rm lcb}(s_t,a_t;\cD_{\rm RO2O}\cup 
\cD_{
\rm online
}) \big] \\
&\leq \sum\nolimits_{t=1}^{T} \mathbb{E}_{\pi^*} \big[ \Gamma_i^{\rm lcb}(s_t,a_t;\cD_{\rm RO2O}) \big],
\end{aligned}
\end{equation}
where $\widetilde\pi$ and $\pi^{*}$ are the learned policy and the optimal policy in $\cD_{\rm RO2O}\cup 
\cD_{
\rm online
}$, respectively.
\end{theorem}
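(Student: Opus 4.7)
The plan is to prove the two inequalities separately. The right-hand inequality is a monotonicity statement on the LCB bonus with respect to the dataset, while the left-hand inequality is a PEVI-style sub-optimality decomposition applied to the augmented dataset $\cD_{\rm RO2O}\cup \cD_{\rm online}$.

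For the right-hand inequality, I would observe that incorporating $\cD_{\rm online}$ into the regression augments the covariance matrix by the positive semi-definite term $\Lambda^{\rm online}_t = \sum_{(s,a)\in \cD_{\rm online}}\phi(s,a)\phi(s,a)^\top$. Hence the full covariance $\widetilde\Lambda_t^{\rm all}=\widetilde\Lambda_t+\Lambda^{\rm online}_t \succeq \widetilde\Lambda_t$ in the Loewner order, so $(\widetilde\Lambda_t^{\rm all})^{-1}\preceq \widetilde\Lambda_t^{-1}$, which yields $\phi(s,a)^\top (\widetilde\Lambda_t^{\rm all})^{-1}\phi(s,a) \leq \phi(s,a)^\top \widetilde\Lambda_t^{-1}\phi(s,a)$ pointwise. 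Taking square roots, multiplying by $\beta_t$, and summing over $t$ then gives the second inequality. A minor subtlety is to use a single $\beta_t$ valid for both datasets; the one tuned for $\cD_{\rm RO2O}\cup\cD_{\rm online}$ dominates up to logarithmic factors and can be used uniformly.

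For the left-hand inequality, I would adapt the PEVI framework of \citeA{jin2021pessimism}. First, write the pessimistic estimator $\widetilde V_t(s)=\max_a(\phi(s,a)^\top \widetilde w_t - \Gamma^{\rm lcb}(s,a;\cD_{\rm RO2O}\cup\cD_{\rm online}))$, where $\widetilde w_t$ is the LSVI solution obtained by augmenting Equation~\eqref{eq::w_ood_solu} with the online samples. Second, verify that $\Gamma^{\rm lcb}$ is a valid $\xi$-uncertainty quantifier, i.e.\ $|\phi(s,a)^\top\widetilde w_t - (\cT \widetilde V_{t+1})(s,a)| \leq \Gamma^{\rm lcb}(s,a;\cD_{\rm RO2O}\cup\cD_{\rm online})$ with high probability, via a self-normalized concentration argument on the noise in the Bellman targets. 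Under the linear MDP assumption, $V_{t+1}$ is linear in $\phi$, so the OOD and adversarial pseudo-samples contribute zero-bias regularization in expectation and only modify the covariance term. Third, invoke the standard performance-difference decomposition of PEVI to conclude ${\rm SubOpt}(\pi^*,\widetilde\pi)\leq \sum_{t=1}^T \mathbb{E}_{\pi^*}[\Gamma^{\rm lcb}_i(s_t,a_t;\cD_{\rm RO2O}\cup\cD_{\rm online})]$.

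The main obstacle is verifying that $\Gamma^{\rm lcb}$ remains a valid uncertainty quantifier after augmenting the regression with the OOD and adversarial samples, whose targets are derived from the current $\widetilde Q$ rather than from fresh stochastic transitions. The self-normalized bound of \citeA{jin2021pessimism} relies on the targets being martingale-adapted to a natural filtration; this is standard for the real transitions in $\cD_{\rm offline}\cup \cD_{\rm online}$, but for the pseudo-targets $\hat y = Q - \alpha U$ the effective ``noise'' vanishes in expectation only because of the linearity of $V_{t+1}$ in $\phi$. The delicate step is therefore to control the bias these pseudo-samples introduce, so that the bonus $\beta_t[\phi^\top \Lambda^{-1}\phi]^{1/2}$ still dominates the estimation error; a covering argument over the function class of $\widetilde V_{t+1}$ (in the spirit of Lemma B.3 of \citeR{jin2021pessimism}) should suffice, provided $\beta_t$ absorbs the extra logarithmic factors from the enlarged dataset.
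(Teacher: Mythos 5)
Your proposal follows essentially the same route as the paper: the second inequality via Loewner-order monotonicity of the covariance matrix after adding the positive semi-definite online term, and the first via the PEVI decomposition once $\Gamma^{\rm lcb}$ on the augmented dataset is certified as a valid $\xi$-uncertainty quantifier. The only difference is that you explicitly flag and propose to handle the concentration subtlety for the OOD/adversarial pseudo-targets, whereas the paper dispatches that step by asserting it holds ``under mild assumptions'' with a citation to RORL --- so your treatment is, if anything, more careful than the paper's own.
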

Theorem~\ref{thm:optimality} shows that the optimality gap shrinks if the data
coverage of $\pi^*$ is better. See appendix for the proof. Considering a sub-optimal dataset is used in offline training, via interacting and learning in online fine-tuning, the agent is potential to obtain high-quality data to consistently reduce the sub-optimality and result in a near-optimal policy. 


\section{Experiments}
We present a comprehensive evaluation of RO2O in the context of the Offline-to-Online RL setting. Specifically, we investigate whether RO2O can perform favorable offline training and further policy improvement given limited interactions. We compare RO2O to existing offline RL algorithms in offline pre-training and also compare it to offline-to-online algorithms in online adaptation. We also conduct ablation studies and visualizations to illustrate the effectiveness of our method.

\subsection{Setups and Baselines}
Our experiments are conducted on challenging environments from the D4RL \shortcite{d4rl} benchmark, specifically focusing on the Mujoco and AntMaze tasks. These environments are carefully selected to simulate real-world scenarios with limited data availability. We compare RO2O with the following RL algorithms: (\romannumeral1) PEX \shortcite{pex}: A recent offline-to-online method that performs policy expansion has shown promising results in longer online interaction steps. (\romannumeral2) AWAC \shortcite{awac}: An efficient algorithm that employs an advantage-weighted actor-critic framework, which is one of the earlier methods to achieve policy improvement during the online fine-tuning phase. (\romannumeral3) IQL \shortcite{iql}: A state-of-art offline algorithm that attempts to conduct in-sample learning and expected regression. (\romannumeral4) Cal-QL \shortcite{cal-ql}: An efficient algorithm calibrates $Q$-values within a reasonable range to improve policy performance. (\romannumeral5) SPOT \shortcite{SPOT}: An algorithm utilizes density regularization to limit the difference between the learning policy and the current policy. (\romannumeral6) SAC \shortcite{sac}: A SAC agent trained from scratch which highlights the benefit of Offline-to-Online RL, as opposed to fully online RL, in terms of learning efficiency.

\subsection{Performance Comparisons}

We conducted our comparisons using multiple offline datasets and tasks. In this study, we exclude the random datasets, as in typical real-world scenarios, we rarely use a random policy for system control. For the Mujoco locomotion tasks, we conducted 2.5M training steps over all datasets during the offline pre-training phase. Then, we proceeded with online fine-tuning, involving an additional 250K environment interactions. For the AntMaze navigation tasks, we performed 1M training steps on six types of datasets with different complexities and qualities, followed by 250K additional online interactions. It is worth noting that we all use an ensemble size of 10 for training in both tasks. More details about experiments and implementations are introduced in the Appendix \ref{app:environment} and \ref{app:implement}.

\paragraph{Offline performance on MuJoCo locomotion tasks}
First of all, we evaluate the performance of each method on Mujoco locomotion tasks, which include three environments: \textit{Halfcheetah}, \textit{Walker2d}, and \textit{Hopper}. Different types of datasets are selected for offline pre-training, including medium, medium-replay, medium-expert, and expert datasets. Table~\ref{table1} reports the offline performance of the average normalized score across five seeds. Compared to other algorithms, RO2O has certain superiority in the offline training phase.

\begin{table}[ht]
\centering
\tiny
\caption{Offline performance on MuJoCo locomotion tasks.}
\label{table1}
\resizebox{\textwidth}{!}{
\begin{tabular}{lllllll}
\hline
\textbf{Environment} & PEX & AWAC & IQL & SPOT & Cal-QL & RO2O \\ \hline
halfcheetah-medium                    & 48.67 ± 0.15  & 50.00 ± 0.27  & 48.33 ± 0.35  & 46.78 ± 0.50   & 47.75 ± 0.38  & \textbf{66.08 ± 0.45}  \\
halfcheetah-medium-replay             & 44.57 ± 0.47  & 45.28 ± 0.31  & 43.75 ± 0.97  & 43.29 ± 0.47   & 46.26 ± 0.57  & \textbf{60.89 ± 1.01}  \\
halfcheetah-medium-expert             & 78.9 ± 11.77  & 94.73 ± 0.64  & 94.19 ± 0.30  & 94.70 ± 1.02   & 67.14 ± 7.40  & \textbf{104.73 ± 2.07} \\
halfcheetah-expert                    & 91.2 ± 4.43   & 97.57 ± 0.94  & 97.11 ± 0.19  & 95.21 ± 0.93   & 96.5 ± 0.66   & \textbf{104.08 ± 1.66} \\ \hline
walker2d-medium                       & 61.87 ± 2.06  & 84.24 ± 1.15  & 83.96 ± 2.68  & 56.81 ± 3.96   & 64.07 ± 7.61  & \textbf{103.25 ± 1.67} \\
walker2d-medium-replay                & 38.4 ± 13.36  & 80.92 ± 1.70  & 77.28 ± 7.45  & 70.49 ± 22.57  & \textbf{94.48 ± 6.44}  & 93.05 ± 4.74  \\
walker2d-medium-expert                & 98.8 ± 4.78   & 112.62 ± 0.66 & 111.24 ± 0.92 & 77.58 ± 8.49   & 108.26 ± 5.56 & \textbf{120.01 ± 0.70} \\
walker2d-expert                       & 103.13 ± 6.69 & 91.66 ± 35.78 & 112.67 ± 0.21 & 105.13 ± 13.03 & 111.93 ± 0.24 & \textbf{112.84 ± 3.42} \\ \hline
hopper-medium                         & 51.3 ± 5.07   & 71.33 ± 8.80  & 56.33 ± 2.83  & 82.25 ± 2.16   & 83.34 ± 0.91  & \textbf{104.95 ± 0.03} \\
hopper-medium-replay                  & 77.9 ± 5.77   & 96.56 ± 2.23  & 82.55 ± 17.57 & 70.37 ± 12.51  & 85.59 ± 1.85  & \textbf{103.77 ± 0.47} \\
hopper-medium-expert                  & 46.73 ± 48.88 & 108.36 ± 3.12 & 85.21 ± 39.43 & 96.52 ± 12.03  & 108.82 ± 0.21 & \textbf{112.69 ± 0.02} \\
hopper-expert                         & 102.27 ± 6.11 & 103.88 ± 8.98 & 100.36 ± 9.96 & 110.00 ± 0.41  & 107.29 ± 3.50 & \textbf{112.31 ± 0.01} \\
\hline
\end{tabular}}
\end{table}

\paragraph{Fine-tuning performance on Mujoco locomotion tasks}
Figure~\ref{fig:mujoco-ft} illustrates the fine-tuning performance of different methods on Mujoco locomotion tasks. Compared with pure online RL methods such as SAC, other methods almostly reflect the advantages of offline pre-training. Within limited online interactions, IQL, AWAC and SPOT fail to achieve effective policy improvement, while PEX suffers from the performance drop. For PEX, we speculate that it is due to the randomness of strategies expanded in the online phase, which could lead to a poor initial strategy and requires lots of interactions to improve its performance. For Cal-QL, it can achieve effective performance improvement in most tasks, but the improvement is still relatively limited. In contrast, RO2O exhibits a significant improvement in performance during the fine-tuning process and requires fewer steps to achieve the highest score. Compared with them, RO2O showcases comparable or better performance with 250K fine-tuning steps, indicating the efficiency and superiority.

\begin{figure*}[!t]
\centering
\includegraphics[width=\textwidth]{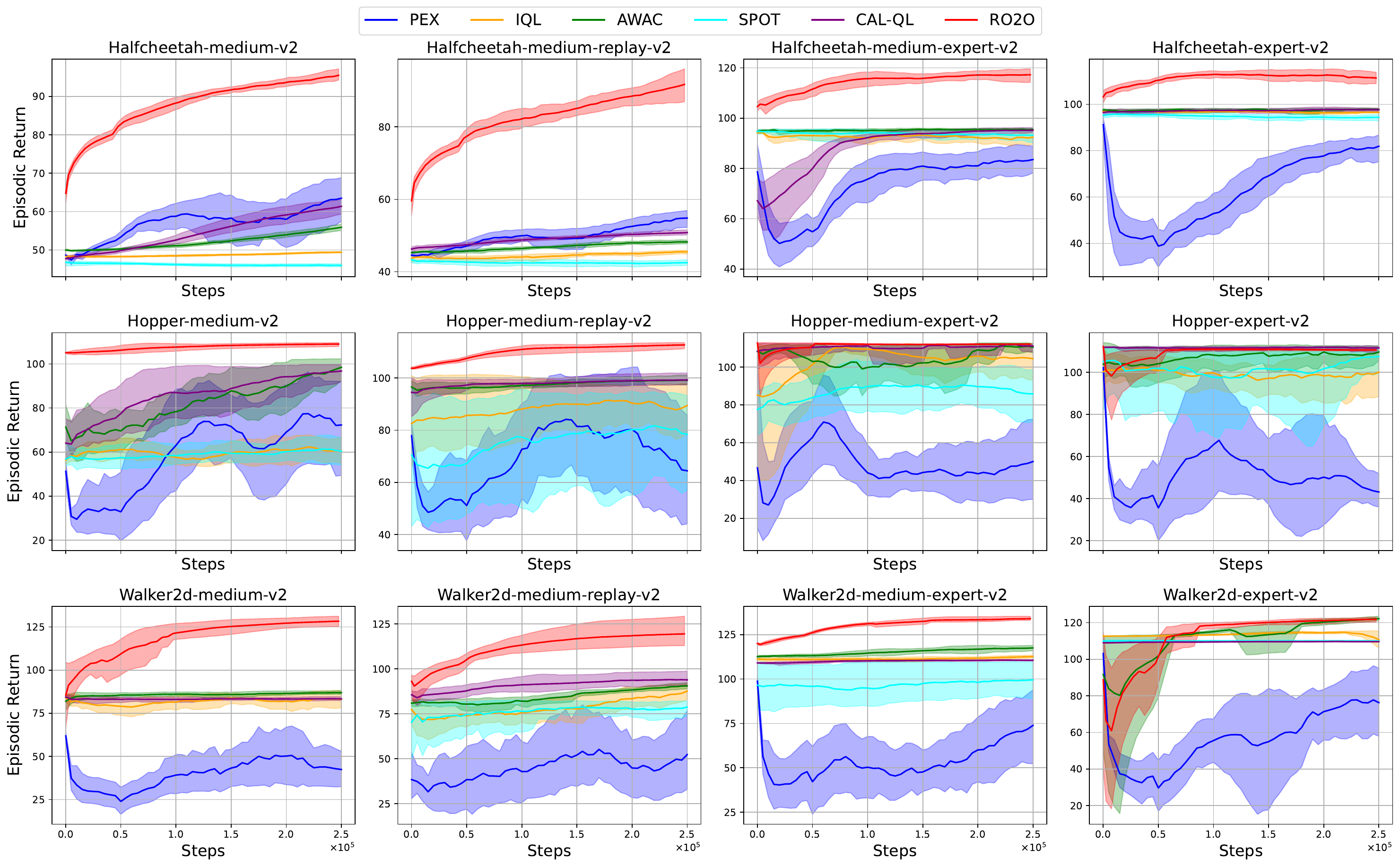}
\caption{Fine-tuning performance curves of different methods across five seeds on MuJoCo locomotion tasks. The mean and standard deviation are shown by the solid lines and the shaded areas, respectively.}
\label{fig:mujoco-ft}
\end{figure*}

\paragraph{Offline performance on AntMaze navigation tasks}
We also perform evaluations on the challenging AntMaze navigation tasks, where agents must learn to control the robot and stick trajectories together given sparse rewards. Agents are pre-trained on six types of datasets with different complexity and quality. Considering the poor performance of AWAC and SAC in this task, we have opted not to compare our approach with them. Table~\ref{tab:antmaze-offline} reports the normalized scores using different methods across five seeds. We observe that RO2O achieves the best performance on almost all tasks.

\begin{table}[ht]
\centering
\caption{Offline performance on the challenging AntMaze environment.}
\label{tab:antmaze-offline}
\resizebox{\columnwidth}{!}{
\begin{tabular}{lllllll}
\hline
\textbf{Environments} & PEX & IQL & SPOT & Cal-QL & RO2O \\
\hline
antmaze-umaze & 87.33 ± 4.04 & 77.0 ± 6.38 & 89 ± 5.29 & 65.75 ± 4.03 & \textbf{93.67 ± 5.77} \\
antmaze-umaze-diverse & 58.67 ± 9.07 & \textbf{65.24 ± 6.40} & 42.75 ± 5.32 & 48.75 ± 4.43 & 63.67 ± 8.02 \\
antmaze-medium-diverse & 72.33 ± 5.13 & 73.75 ± 6.30 & 74.25 ± 4.99 & 1.25 ± 0.96 & \textbf{91.67 ± 5.13} \\
antmaze-medium-play & 68 ± 6.56 & 66 ± 7.55 & 71.5 ± 8.43 & 0.0 ± 0.0 & \textbf{86.67 ± 3.06} \\
antmaze-large-diverse & 45.67 ± 4.16 & 30.25 ± 4.20 & 36.5 ± 17.62 & 0.0 ± 0.0 & \textbf{65.33 ± 5.71} \\
antmaze-large-play & 51 ± 17.69 & 42.0 ± 5.23 & 30.25 ± 17.91 & 0.25 ± 0.5 & \textbf{61.33 ± 9.82} \\
\hline
\end{tabular}
}
\end{table}

\paragraph{Fine-tuning performance on AntMaze navigation tasks}
Figure~\ref{fig:antmaze-plot} demonstrates the fine-tuning performance on AntMaze tasks. In the fine-tuning phase, IQL and SPOT achieve stable learning but limited improvement, while PEX suffers from a performance drop. Cal-QL rapidly enhances its performance from a poor initial policy, but cannot perform well in large scenarios. Different from these baselines, RO2O achieves significant improvement over all tasks within limited interactions. It provides good initial performance for online fine-tuning within limited interaction steps, demonstrating considerable advantages. In summary, RO2O achieves robust and state-of-the-art performance in comparison to all baseline methods in almost all tasks.

\begin{figure}[ht]
\centering
\includegraphics[width=0.8\textwidth]{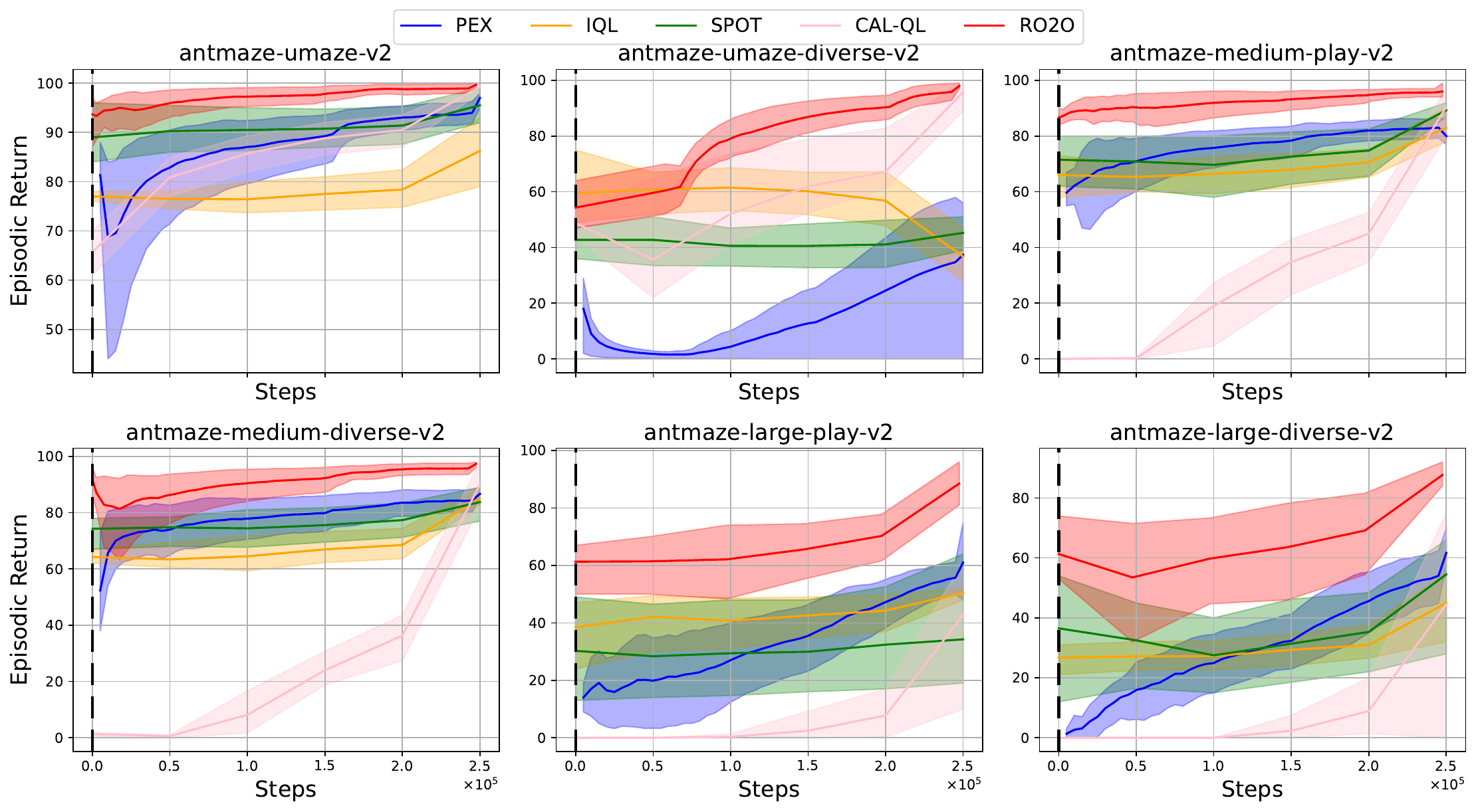} 
\caption{Fine-tuning performance curves of different methods across five seeds on Antmaze navigation tasks. The mean and standard deviation are shown by the solid lines and the shaded areas, respectively.}
\label{fig:antmaze-plot}
\end{figure}

\paragraph{Robustness Analysis} 
RO2O demonstrates robust performance in all environments except for some scenarios, such as AntMaze-large-play and AntMaze-large-diverse. The performance curves seem to exhibit more fluctuating behaviors, albeit with increasing steps, the performances are superior. We believe there are several reasons: (i) Antmaze tasks provide binary rewards, so that similar policies could obtain largely different returns, thus leading to fluctuated performance. While the regularization terms in RO2O aim to present smooth $Q$-functions and policies, their effects can be limited when the environment is complex, as in a large maze. (ii) In Antmaze tasks, RO2O utilizes independent TD targets instead of shared TD targets in the Bellman update of $Q$-values, which in some degree increases the diversity of $Q$-value estimation for OOD actions. During online fine-tuning phase, RO2O leverages the maximum of ensemble estimation of $Q$-values as the TD target to encourage the exploration of policies. This causes that agents tend to choose OOD actions, which may not perform well and also lead to fluctuated performance.

\subsection{Ablation Study and Visualization}
We analyze the effects of the smoothness regularization and OOD sampling terms on the learning process. Specifically, we consider variants of RO2O without policy smoothing, $Q$-smoothing, or OOD penalty. We conduct the ablation studies on walker2d-medium and hopper-medium tasks. Figure~\ref{fig:ablation} demonstrates the experimental results over three random seeds. In the offline process, we observe that OOD penalty is indispensable to prevent divergence caused by OOD actions. However, it becomes trivial in the online phase since new states or actions may lead to high values and better policies. We also find that policy smoothing and $Q$-smoothing are useful in the offline-to-online process to obtain an effective improvement and mitigate the variance of performance. 

\begin{figure}[ht]
\centering
\includegraphics[width=0.45\textwidth]{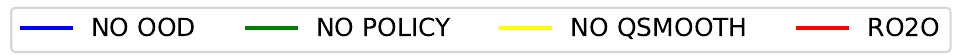}
\includegraphics[width=0.66\textwidth]{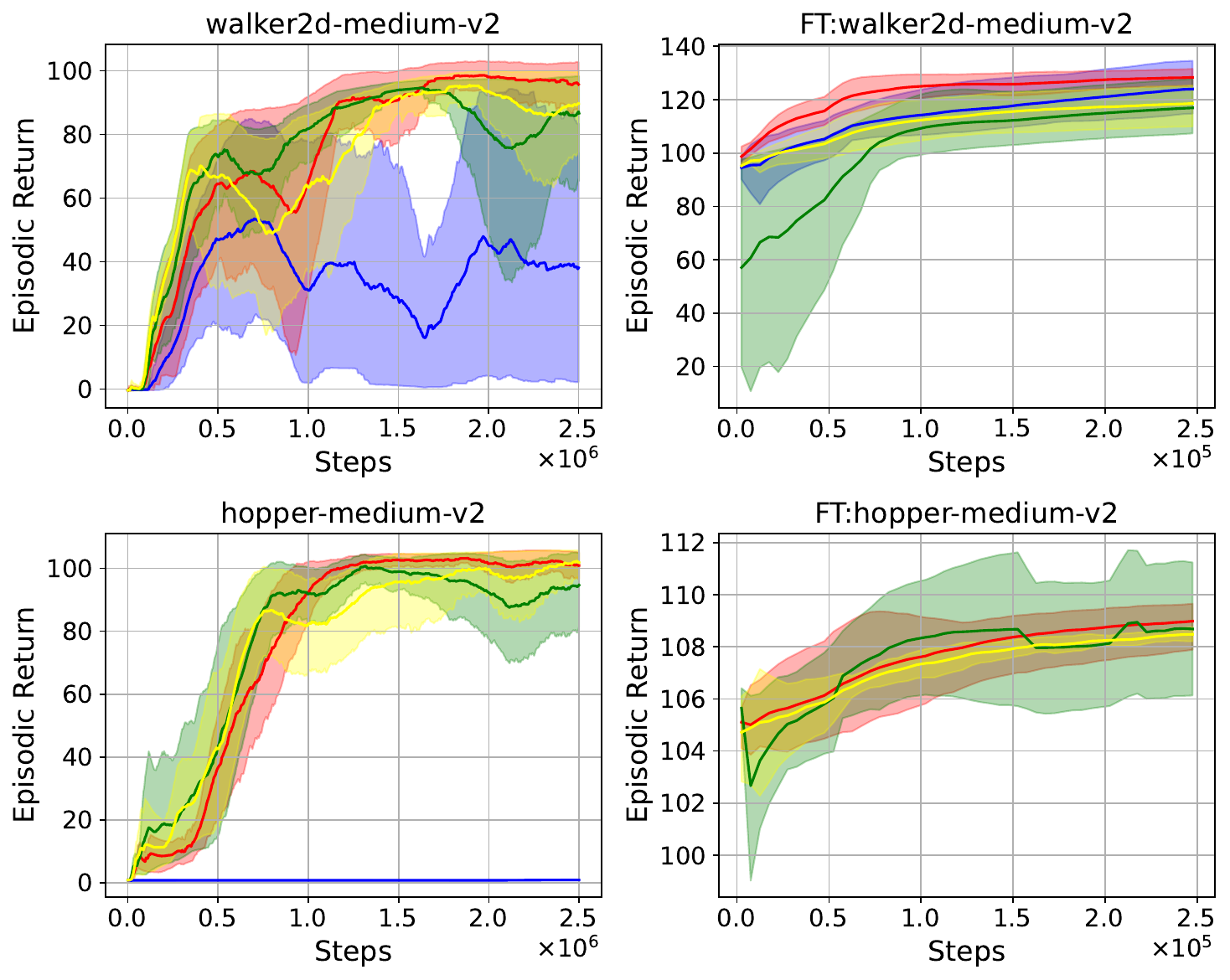}
\caption{Offline (left column) and online performance (right column) when eliminating OOD penalty, policy smoothing, or $Q$-smoothing.}
\label{fig:ablation}
\end{figure}

Additional, we employ various coefficients of the different losses $\eta_1$, $\eta_2$, $\eta_3$ and the radius of the perturbation set $\epsilon$ to investigate the algorithm’s sensitivity to the hyperparameter, where $\eta_1$ maintains a constant value of 0.0001, $\eta_2$ is tuned within \{0.0, 0.1, 0.5\}, $\eta_3$ is searched in \{0.1, 1.0\} and $\epsilon$ is tuned within \{0.0, 0.005, 0.01\}. The results shown in Figure \ref{fig:ablation2} demonstrate that the coefficients of the different losses are a critical factor for both offline and online. We observed that the algorithm is relatively sensitive to the choice of parameters, especially $\eta_3$, where even small changes can lead to significant performance degradation. Constraints on the perturbation set behavior policy may have a greater impact compared to other coefficients. Moreover, in addition to adjusting the coefficients of the different loss functions, we can also modify the radius of the perturbation set to avoid excessive pessimism. Additionally, we notice that 

\begin{figure}[ht]
\centering
\includegraphics[width=0.68\textwidth]{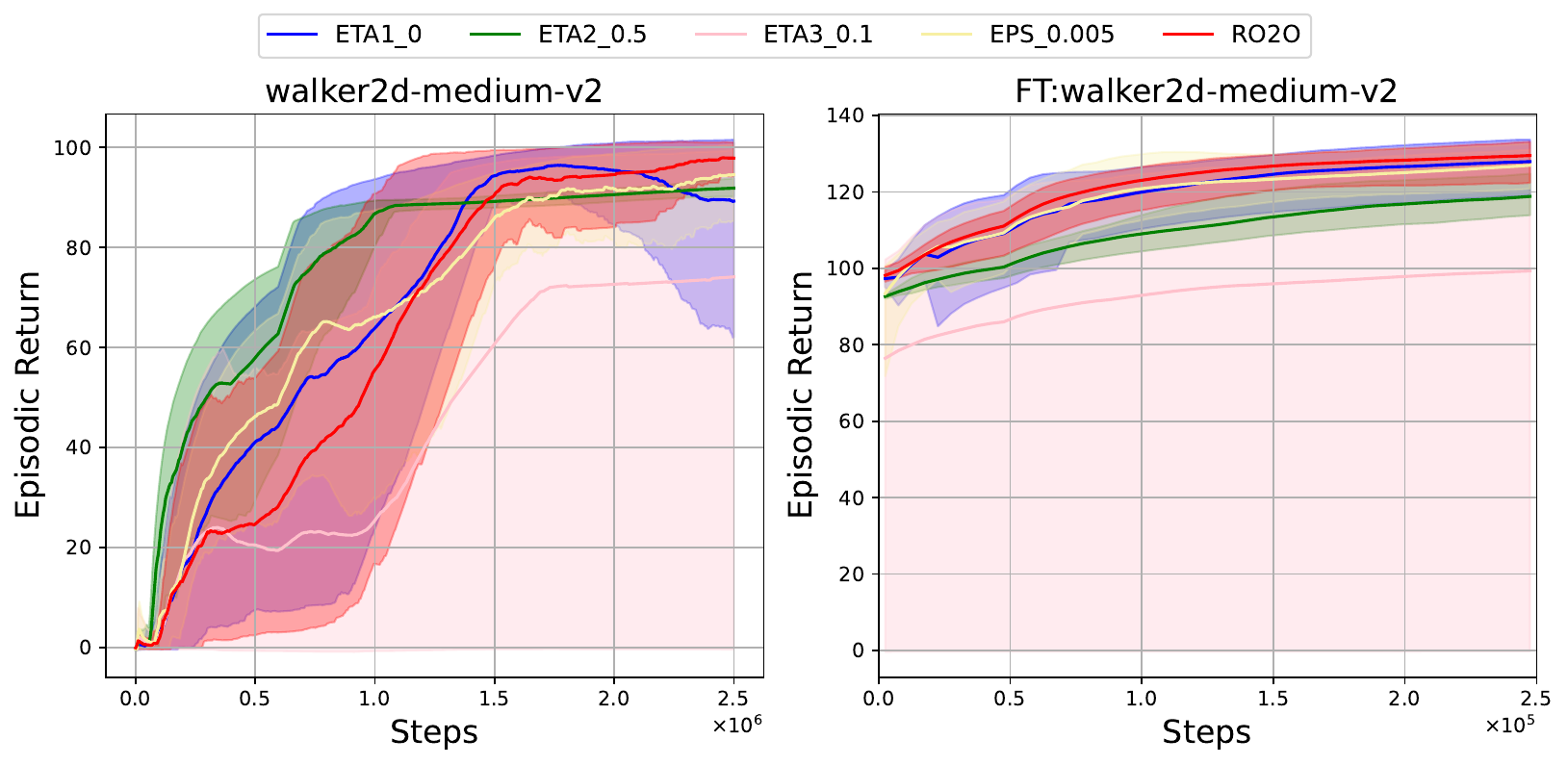}
\caption{Sensitivity analysis of the various coefficients $\eta_1$, $\eta_2$, $\eta_3$ and the radius of the perturbation set $\epsilon$.}
\label{fig:ablation2}
\end{figure}

To better understand the effectiveness of RO2O in the offline-to-online process, we compare the distribution of the offline states and the visited states in online interactions, as shown in Figure~\ref{fig:visualization} (left). The states are visualized via T-SNE. Meanwhile, we use brightness to represent the corresponding reward for each state, as shown in Figure~\ref{fig:visualization} (right). We find that, with consistent policy improvement in online fine-tuning, the agent can obtain high-quality (i.e., with high reward) online experiences that are different from the offline data. Such a phenomenon verifies our theoretical analysis in  Theorem~\ref{thm:optimality}, where RO2O can consistently reduce the sub-optimality gap and improve the policy via online fine-tuning.

\begin{figure}[ht]
    \centering
    \includegraphics[width=0.66\textwidth]{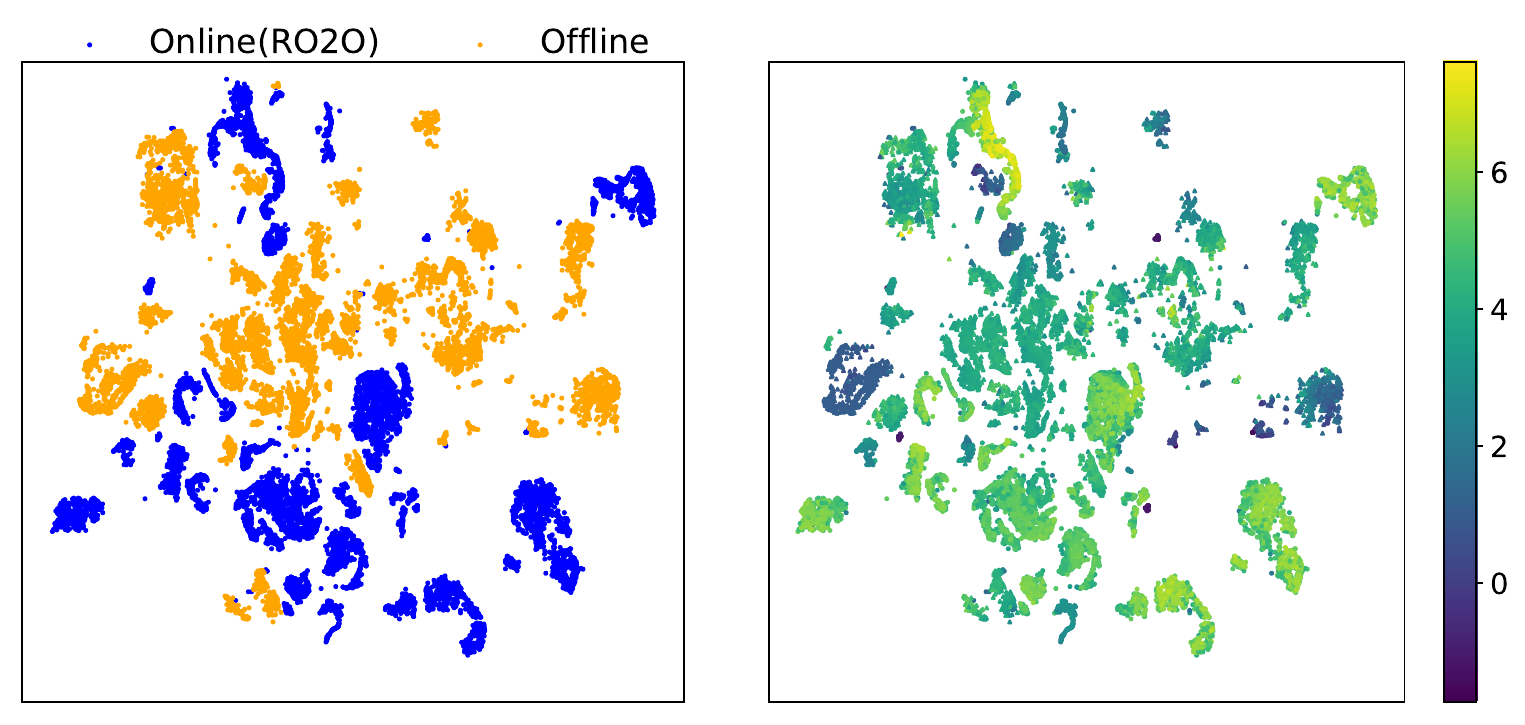}
    \caption{Visualization of the distribution of states (left) and rewards (right) from offline data and online experiences.}
    \label{fig:visualization}
\end{figure}

\section{Computational Cost Comparison}

We compare the computational cost of RO2O against baselines. All methods are run on a single machine with one GPU (NVIDIA GeForce RTX 3090). For each method, we measure the average epoch time (i.e., $1\times10^3$ training steps) and the GPU memory usage on the walker2d-medium-v2 task. The results in Table \ref{tab:runtime} show that although RO2O includes the OOD state-action sampling and the robust training procedure, it does not significantly lag behind other baselines in terms of runtime. And we implemented these procedures efficiently based on the parallelization of $Q$ networks.

\begin{table}[ht]
\footnotesize
\centering
\caption{The computational cost of various algorithms on walker2d-medium-v2.}
\label{tab:runtime}
\begin{tabular}[\textwidth]{ccc}
\hline
       & Runtime (s/epoch) & GPU Memory (GB) \\ \hline
PEX    & 5.18             & 2.17           \\
AWAC   & 8.44             & 5.20           \\
IQL    & 6.31             & 2.38           \\
SAC-10 & 7.82             & 2.23           \\
SPOT   & 5.24             & 5.20           \\
Cal-QL & 15.7             & 2.69           \\
RO2O   & 21.8             & 2.85           \\ \hline
\end{tabular}
\end{table}


\section{Conclusion}

In this paper, we propose RO2O for Offline-to-Online RL by incorporating $Q$-ensembles and smoothness regularization. By regularizing the smoothness of value and policy, RO2O achieves stable offline learning and effective policy improvement in online fine-tuning. Moreover, RO2O maintains the same architecture in the offline-to-online process without specific modifications. Empirical results on Mujoco and AntMaze tasks demonstrate the effectiveness and superiority of RO2O. Future work may explore ways to perform offline-to-online learning with domain gaps, including dynamics or reward differences. 


    








\acks{This work was completed jointly by Xiaoyu Wen and Xudong Yu as co-first authors and was supported by the National Science Fund for Distinguished Young Scholars (Grant No.62025602), the National Natural Science Foundation of China (Grant Nos. 62306242, U22B2036, 11931915), Fok Ying-Tong Education Foundation China (No.171105), the Tencent Foundation, and XPLORER PRIZE.
}

\appendix
\section{Theoretical Analysis}

\subsection{RO2O Algorithm in Linear MDPs}

We consider the loss function of RO2O algorithm in offline learning, which contains temporal-difference (TD) error, smoothness loss, and OOD penalty. Converting the loss function in linear MDPs, the parameter $\hw$ in RO2O can be solved by 
\begin{equation}
\begin{aligned}
\label{eq:app_simplified_problem}
\widetilde w_{\rm offline} =  \min_{w\in \mathcal{R}^d}  &\Big[\sum_{i=1}^{m} \big(y_t^i-Q_{w}(s_t^i,a_t^i)\big)^2 + 
 \sum_{(\hat s, \hat a, \hat y) \sim \mathcal{D}_{\text{ood}} } \big(\hat y - Q_{w}(\hat s,\hat a)\big)^2 
\\& +  \sum_{i=1}^{m} \frac{1}{|\mathbb{B}_d(s_t^i,\epsilon)|} \sum_{(\hat{s}_t^i,s_t^i,a_t^i)\in \mathcal{D}_{\text{robust}}}  \big(Q_{w}(s_t^i,a_t^i) - Q_{w}(\hat{s}_t^i,a_t^i)\big)^2\Big],
\end{aligned}
\end{equation}
where $y=\widehat{\cT}Q$ and $\hat{y}=\widehat{\cT}^{\rm ood}Q$ denote the learning targets for simplicity and $|\mathbb{B}_d(s_t^i,\epsilon)|$ means the size of adversarial samples. The three terms in Equation~\eqref{eq:app_simplified_problem} correspond to TD-loss, OOD penalty, and smoothness constraints, respectively. For the clarity of notations, we explicitly define a dataset $\cD_{\rm ood}$ for OOD sampling, and an adversarial dataset $\cD_{\rm robust}$ for the smoothness constraint. Following LSVI \shortcite{lsvi-2020}, the solution of Equation~\eqref{eq:app_simplified_problem} takes the following form as
\begin{equation}
\label{eq::app_w_ood_solu}
\widetilde w_t =\widetilde \Lambda_t^{-1} \Big( \sum_{i=1}^{m} \phi(s_t^i,a_t^i) y_t^i + \sum_{(\hat s, \hat a, \hat y) \sim \mathcal{D}_{\text{ood}} }\phi(\hat s,\hat a)  \hat y  \Big), 
\end{equation}
and the covariance matrix $\widetilde{\Lambda}_t$ is
\begin{equation}
\begin{aligned}
\label{eq:covariance-rorl2}
\widetilde \Lambda_t &= \widetilde \Lambda^{\rm in}_t + \widetilde \Lambda^{\rm ood}_t+ \widetilde \Lambda^{\rm robust}_t \\& =\sum_{i=1}^{m} \phi(s_t^i,a_t^i)\phi(s_t^i,a_t^i)^\top+\sum_{\mathcal{D}_{\text{ood}}} \phi(\hat s_t,\hat a_t)\phi(\hat s_t,\hat a_t)^\top \\& +
\sum_{i=1}^{m} \frac{1}{|\mathbb{B}_d(s_t^i,\epsilon)|}  \sum_{\mathcal{D}_{\text{robust}}} \big[\phi(\hat s_t^i,a_t^i) - \phi(s_t^i,a_t^i)\big]\big[\phi(\hat s_t^i,a_t^i) - \phi(s_t^i,a_t^i)\big]^\top,
\end{aligned}
\end{equation}
where the first term $\widetilde \Lambda^{\rm in}_t$ is calculated on in-distribution (i.e., in $\cD_{\rm offline}$) data, the second term is $\widetilde \Lambda^{\rm ood}_t$ is calculated on OOD samples (i.e., in $\cD_{\rm ood}$), and the third term is calculated on adversarial samples (i.e., in $\cD_{\rm robust}$), .

For comparison, we consider a variant of RO2O without smoothness regularization, and denote it as `variant'. The parameter of this variant can be solved by 
\begin{equation}
\label{eq:app_variant}
\widetilde w_{\rm variant} =  \min_{w\in \mathcal{R}^d}  \Big[\sum_{i=1}^{m} \big(y_t^i-Q_{w}(s_t^i,a_t^i)\big)^2 + 
\sum_{(\hat s, \hat a, \hat y) \sim \mathcal{D}_{\text{ood}} } \big(\hat y - Q_{w}(\hat s,\hat a)\big)^2 \Big],
\end{equation}

Following LSVI, the solution of this variant takes a similar form as Equation~\eqref{eq::app_w_ood_solu}, but with a different covariance matrix as 
\begin{equation}
\nonumber
\widetilde \Lambda^{\rm variant}_t = \widetilde \Lambda^{\rm in}_t+\widetilde \Lambda^{\rm ood}_t.
\end{equation}

We remark that the difference in covariance matrices between RO2O and this variant originates from the additional adversarial samples from $\cD_{\rm robust}$ used in RO2O. We denote the dataset for RO2O as 
\[\cD_{\rm RO2O}=\cD_{\rm offline}\cup \cD_{\rm ood}\cup \cD_{\rm robust},
\] 
and for this variant as 
\[
\cD_{\rm variant}=\cD_{\rm offline}\cup \cD_{\rm ood}
\]
without smoothness constraints. 

\subsection{Effective of Smoothness with LCB}

Following the theoretical framework in PEVI \shortcite{jin2021pessimism}, the sub-optimality gap of offline RL algorithms with uncertainty penalty is upper-bounded by the lower-confidence-bound (LCB) term, defined by
\begin{equation}
\nonumber
\Gamma^{\rm lcb}(s_t,a_t;\cD)= \beta_t \big[\phi(s_t,a_t)^\top\Lambda_t^{-1}\phi(s_t,a_t)\big]^{\nicefrac{1}{2}},
\end{equation}
which forms an uncertainty quantification with the covariance matrix $\Lambda_i^{-1}$ given the dataset $\cD_i$ \shortcite{lsvi-2020,jin2021pessimism}, and the form of $\Lambda_t$ depends on the learned dataset (e.g., $\cD_{\rm RO2O}$ or $\cD_{\rm variant}$). $\beta_t$ is a factor. LCB measures the confidence interval of $Q$-function learned by the given dataset. Intuitively, $\Gamma^{\rm lcb}_i(s,a)$ can be considered as a reciprocal pseudo-count of the state-action pair in the representation space. 

In the following, we aim to show the smoothness regularization leads to smaller uncertainties for arbitrary state-action pairs, especially for OOD samples (e.g., from online interactions). We start by building a Lemma to show the covariance matrix $\widetilde{\Lambda}^{\rm robust}_t$ introduced by smoothness regularization calculated in $\cD_{\rm robust}$ is positive-definite.

\begin{lemma}
Assuming that the size of adversarial samples $\mathbb{B}_d(s^i_t,\epsilon)$ is sufficient and the Jacobian matrix of $\phi(s, a)$
has full rank, then the covariance matrix $\widetilde \Lambda_t^{\rm robust}$ is positive-definite: $\widetilde \Lambda_t^{\rm robust} \succeq \lambda \cdot \mathrm{I}$ where $\lambda > 0$.
\label{tm:PD_matrix}
\end{lemma}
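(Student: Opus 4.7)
The plan is to rewrite $\widetilde{\Lambda}_t^{\rm robust}$ in a form that exposes its quadratic-form action on an arbitrary test vector, and then use the full-rank Jacobian hypothesis together with the richness of $\mathbb{B}_d(s_t^i,\epsilon)$ to bound that action away from zero uniformly. Since $\widetilde{\Lambda}_t^{\rm robust}$ is a sum of outer products $[\phi(\hat s,a)-\phi(s,a)][\phi(\hat s,a)-\phi(s,a)]^\top$, it is automatically positive semi-definite. The real content of the lemma is that it is strictly positive definite with a quantitative floor $\lambda>0$, so for any unit vector $v\in\mathbb{R}^d$ I will need to lower-bound
\begin{equation}
\nonumber
v^\top\widetilde{\Lambda}_t^{\rm robust}v=\sum_{i=1}^m\frac{1}{|\mathbb{B}_d(s_t^i,\epsilon)|}\sum_{\mathcal{D}_{\rm robust}}\bigl(v^\top[\phi(\hat s,a)-\phi(s,a)]\bigr)^2.
\end{equation}

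First I would perform a first-order Taylor expansion of the feature map in its state argument around each anchor $(s_t^i,a_t^i)$. Writing $J_\phi(s,a)$ for the Jacobian with respect to $s$, we have $\phi(\hat s,a)-\phi(s,a)=J_\phi(s,a)(\hat s-s)+R(\hat s,s,a)$ with $\|R\|=O(\epsilon^2)$ uniformly on $\mathbb{B}_d(s,\epsilon)$. Substituting gives, up to a residual that is controllable in $\epsilon$,
\begin{equation}
\nonumber
v^\top\widetilde{\Lambda}_t^{\rm robust}v\;\approx\;\sum_{i=1}^m v^\top J_\phi(s_t^i,a_t^i)\Bigl[\tfrac{1}{|\mathbb{B}_d|}\textstyle\sum(\hat s_t^i-s_t^i)(\hat s_t^i-s_t^i)^\top\Bigr]J_\phi(s_t^i,a_t^i)^\top v.
\end{equation}
The inner bracket is the empirical second-moment matrix of the perturbations. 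With the ``sufficient'' sampling assumption, for an $\epsilon$-ball of positive Lebesgue measure sampled isotropically (or densely enough), this matrix is lower bounded in the PSD order by $c\,\epsilon^2\,\mathrm{I}$ for some absolute constant $c>0$ depending on the sampling scheme in $\mathbb{B}_d(s_t^i,\epsilon)$.

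Combining the two ingredients, $v^\top\widetilde{\Lambda}_t^{\rm robust}v$ is bounded below by $c\epsilon^2\sum_i\|J_\phi(s_t^i,a_t^i)^\top v\|_2^2$ minus the Taylor remainder. Because $J_\phi(s,a)$ has full row rank by hypothesis, its smallest singular value $\sigma_{\min}$ is strictly positive at each anchor, so $\|J_\phi^\top v\|_2\ge\sigma_{\min}\|v\|_2$. Picking any single anchor already gives a strictly positive floor, and I would set
\begin{equation}
\nonumber
\lambda\;:=\;c\,\epsilon^2\,\min_{i}\sigma_{\min}\bigl(J_\phi(s_t^i,a_t^i)\bigr)^2\;-\;O(\epsilon^3),
\end{equation}
which is strictly positive once $\epsilon$ is chosen small enough that the Taylor remainder is dominated by the leading term. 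Since this bound holds uniformly in the unit vector $v$, we conclude $\widetilde{\Lambda}_t^{\rm robust}\succeq \lambda\,\mathrm{I}$.

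The main obstacle I anticipate is making the ``sufficient'' sampling condition and the Jacobian full-rank hypothesis interact cleanly: without isotropy, the empirical second-moment matrix of perturbations need only satisfy a spectral lower bound along the coordinates that $J_\phi$ mixes into $\mathbb{R}^d$, and one has to ensure the perturbation set is not degenerate (e.g.\ not concentrated on a hyperplane) in precisely the directions the Jacobian is sensitive to. Formalizing this without an extra genericity assumption on the perturbation distribution, while also controlling the $O(\epsilon^2)$ Taylor residuals so that they do not swallow the leading $O(\epsilon^2)$ lower bound, is the delicate step; the routine parts are the Taylor expansion itself and the singular-value bookkeeping.
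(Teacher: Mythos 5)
Your proof is correct, but it takes a more quantitative route than the paper. The paper's argument is purely algebraic: it observes that each per-anchor matrix $\Phi^i_t$ is a sum of outer products of the difference vectors $z_j=\phi(\hat s_j,a)-\phi(s,a)$, hence positive semi-definite, and then shows by contradiction that $\Phi^i_t$ is strictly positive definite precisely when $\{z_1,\ldots,z_N\}$ spans $\mathbb{R}^d$; it then simply \emph{asserts} that the sufficiency of the adversarial samples together with the full-rank Jacobian guarantees this span condition for at least one anchor, and concludes since the remaining summands are PSD. You instead make that asserted implication explicit: the first-order Taylor expansion $z_j\approx J_\phi(s,a)(\hat s_j-s)$ converts the full-row-rank hypothesis on $J_\phi$ plus an isotropy/richness condition on the perturbation second-moment matrix into a concrete spectral floor $\lambda\approx c\,\epsilon^2\min_i\sigma_{\min}(J_\phi)^2$. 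What your approach buys is an explicit, quantitative $\lambda$ and a clear account of exactly where the Jacobian assumption enters --- the step the paper glosses over --- at the cost of an extra smallness requirement on $\epsilon$ (to dominate the $O(\epsilon^3)$ remainder) and a genericity/isotropy condition on the sampling scheme that the paper's qualitative span argument does not need to spell out. The obstacle you flag (non-degenerate perturbations in the directions $J_\phi$ is sensitive to) is real, but it is exactly the content the paper buries inside the word ``sufficient,'' so your treatment is, if anything, more honest about the hypothesis being used.
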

\begin{proof}
For the $\widetilde \Lambda_t^{\rm robust}$ matrix (i.e., the third part in Eq.~\eqref{eq:covariance-rorl}), we denote the covariance matrix for a specific $i$ as $\Phi^i_t$. Then we have $\widetilde \Lambda_t^{\rm{ood\_diff}}=\sum_{i=1}^{m} \Phi^i_t$. In the following, we discuss the condition of positive-definiteness of $\Phi^i_t$. For the simplicity of notation, we omit the superscript and subscript of $s_t^i$ and $a_t^i$ for given $i$ and $t$. Specifically, we define
\begin{equation}\nonumber
\Phi^i_t = \frac{1}{|\mathbb{B}_d(s_t^i,\epsilon)|}  \sum_{\hat s_j \sim \mathcal{D}_{\text{ood}}(s)} \big[\phi(\hat s_j,a) - \phi(s,a)\big]\big[\phi(\hat s_j,a) - \phi(s,a)\big]^\top,
\end{equation}
where $j\in\{1,\ldots,N\}$ indicates we sample $|\mathbb{B}_d(s_t^i,\epsilon)|=N$ perturbed states for each $s$. For a nonzero vector $y\in \mathbb{R}^d$, we have
\begin{equation}\label{eq:app-semi-definite}
\begin{aligned}
y^\top \Phi^i_t y &= y^\top\left(\frac{1}{N} \sum_{j=1}^N \big(\phi(\hat s_j,a)-\phi(s,a)\big)\big(\phi(\hat s_j,a)-\phi(s,a)\big)^\top\right) y \\
&= \frac{1}{N} \sum_{j=1}^N y^\top \big(\phi(\hat s_j,a)-\phi(s,a)\big)\big(\phi(\hat s_j,a)-\phi(s,a)\big)^\top y \\
&=\frac{1}{N} \sum_{j=1}^N \left(\big(\phi(\hat s_j,a)-\phi(s,a)\big)^\top y \right)^2 \geq 0,
\end{aligned}
\end{equation}
where the last inequality follows from the observation that $\big(\phi(\hat s_j,a)-\phi(s,a)\big)^\top y$ is a scalar. Then $\Phi^i_t$ is always positive semi-definite. In the following, we denote $z_j=\phi(\hat s_j,a)-\phi(s,a)$. Then we need to prove that the condition to make $\Phi^i_t$ positive definite is ${\rm rank}[z_1,\ldots,z_N]=d$, where $d$ is the feature dimension. Our proof follows contradiction. 

In Equation~\eqref{eq:app-semi-definite}, when $y^\top \Phi^i_t y=0$ with a nonzero vector $y$, we have $z_j^\top y=0$ for all $j=1,\ldots,N$. Suppose the set $\{z_1,\ldots,z_N\}$ spans $\mathbb{R}^d$, then there exist real numbers $\{\alpha_1,\ldots,\alpha_N\}$ such that $y=\alpha_1  z_1 +\dots+\alpha_N z_N$. But we have $y^\top y=\alpha_1  z_1^\top y + \dots +\alpha_N z_N^\top y=\alpha_1 \times 0+\ldots+\alpha_N \times 0=0$, yielding that $y=\mathbf{0}$, which forms a contradiction. Hence, if the set $\{z_1,\ldots,z_N\}$ spans $\mathbb{R}^d$, which is equivalent to ${\rm rank}[z_1,\ldots,z_N]=d$, then $\Phi^i_t$ is positive definite.

Under the given conditions, since the size of samples $\mathbb{B}_d(s_t^i,\epsilon)$ is sufficient and the neural network maintains useful variability to make the Jacobian matrix of $\phi(s,a)$ have full rank, it ensures that $\exists k\in[1,m]$, for any nonzero vector $y\in \mathbb{R}^d$, $y^\top \Phi^k_t y > 0$. We have $y^\top \widetilde \Lambda_t^{\rm robust} y = \sum_{i=1}^{m} y^\top \Phi^i_t y \geq y^\top \Phi^k_t y > 0$. Therefore, $\widetilde \Lambda_t^{\rm robust}$ is positive definite, which concludes our proof.
\end{proof}

Recall the covariance matrix of the variant algorithm without smoothness constraint is $\widetilde \Lambda_t^{\text{variant}}=\widetilde \Lambda_t^{\text{in}}+\widetilde \Lambda_t^{\text{ood}}$, and RO2O has a covariance matrix as $\widetilde \Lambda_t=\widetilde \Lambda_t^{\rm variant}+\widetilde \Lambda_t^{\rm robust}$, we have the following corollary based on Lemma \ref{tm:PD_matrix}.

\setcounter{theorem}{0}
\begin{theorem}[restate]
Assuming that the size of adversarial samples $\mathbb{B}_d(s^i_t,\epsilon)$ is sufficient and the Jacobian matrix of $\phi(s, a)$
has full rank, the smoothness constraint leads to smaller uncertainty for $\forall (s^{\star},a^{\star})\in \cS\times\cA$, as 
\begin{equation}
\nonumber
\Gamma^{\rm lcb}(s^{\star},a^{\star};\cD_{\rm RO2O}) < \Gamma^{\rm lcb}(s^{\star},a^{\star};\cD_{\rm variant}),
\end{equation}
where the covariance matrices for these two LCB terms are $\widetilde \Lambda_t$ in Equation~\eqref{eq:covariance-rorl2} and $\widetilde \Lambda^{\rm in}_t+\widetilde \Lambda^{\rm ood}_t$, respectively.
\label{thm:distribution_shift2}
\end{theorem}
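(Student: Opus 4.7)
The plan is to reduce the theorem to a statement about the Loewner order on the two covariance matrices and then invoke the standard monotonicity of the matrix inverse. Concretely, since the LCB term is $\beta_t$ times $[\phi(s^\star,a^\star)^\top \Lambda_t^{-1} \phi(s^\star,a^\star)]^{1/2}$ with the same factor $\beta_t$ on both sides, it suffices to compare the quadratic forms $\phi(s^\star,a^\star)^\top \widetilde\Lambda_t^{-1}\phi(s^\star,a^\star)$ and $\phi(s^\star,a^\star)^\top (\widetilde\Lambda^{\rm variant}_t)^{-1}\phi(s^\star,a^\star)$.

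First I would observe the algebraic identity $\widetilde\Lambda_t - \widetilde\Lambda^{\rm variant}_t = \widetilde\Lambda^{\rm robust}_t$, which is immediate from Equation~\eqref{eq:covariance-rorl2} and the definition of $\widetilde\Lambda^{\rm variant}_t$. Next, I would invoke Lemma~\ref{tm:PD_matrix} already proved, which gives $\widetilde\Lambda^{\rm robust}_t \succeq \lambda \mathrm{I}$ for some $\lambda>0$ under the stated assumptions on the perturbation set and the Jacobian of $\phi$. Since $\widetilde\Lambda^{\rm variant}_t = \widetilde\Lambda^{\rm in}_t + \widetilde\Lambda^{\rm ood}_t$ is a sum of outer products and therefore positive semidefinite, we conclude $\widetilde\Lambda_t \succeq \widetilde\Lambda^{\rm variant}_t + \lambda \mathrm{I} \succ \widetilde\Lambda^{\rm variant}_t$ and, in particular, $\widetilde\Lambda_t \succeq \lambda\mathrm{I}$, so that $\widetilde\Lambda_t$ is invertible.

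Then I would apply the monotonicity of matrix inversion on the cone of positive definite matrices: if $A\succ B\succ 0$, then $B^{-1}\succ A^{-1}$. In the regular case where $\widetilde\Lambda^{\rm variant}_t$ is itself positive definite (which would be assumed or argued, as is standard in LSVI analyses via an implicit ridge regularization $\lambda'\mathrm{I}$), this yields $(\widetilde\Lambda^{\rm variant}_t)^{-1} \succ \widetilde\Lambda_t^{-1}$ strictly, and evaluating the quadratic form at the nonzero vector $\phi(s^\star,a^\star)$ produces the strict inequality on the LCB terms after taking square roots and multiplying by $\beta_t$. In the degenerate case where $\widetilde\Lambda^{\rm variant}_t$ is singular, the LCB for the variant is effectively unbounded on the null space, so the strict inequality holds trivially; either way I would get $\Gamma^{\rm lcb}(s^\star,a^\star;\cD_{\rm RO2O}) < \Gamma^{\rm lcb}(s^\star,a^\star;\cD_{\rm variant})$.

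The main obstacle is the monotonicity step, specifically handling the edge case where $\widetilde\Lambda^{\rm variant}_t$ may fail to be strictly positive definite so that the inverse is not well-defined in the usual sense. I would address this by either (i) adopting the standard LSVI convention of including a small ridge term $\lambda'\mathrm{I}$ in both covariance matrices so that both inverses exist and the Loewner comparison is strict, or (ii) interpreting the comparison on the pseudoinverse and observing that any direction in the null space of $\widetilde\Lambda^{\rm variant}_t$ contributes an infinite LCB for the variant while remaining bounded for RO2O (since $\widetilde\Lambda_t\succeq \lambda\mathrm{I}$), which again gives the strict inequality. The remaining step that $\phi(s^\star,a^\star)\neq 0$ is a mild nondegeneracy assumption consistent with the linear MDP setup and is needed to rule out the trivial direction.
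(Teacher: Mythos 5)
Your proposal is correct and follows the same skeleton as the paper's proof: both reduce the claim to comparing the quadratic forms $\phi^\top\Lambda^{-1}\phi$ under the two covariance matrices, both use the additive decomposition $\widetilde\Lambda_t=\widetilde\Lambda^{\rm variant}_t+\widetilde\Lambda^{\rm robust}_t$, and both invoke Lemma~\ref{tm:PD_matrix} to obtain $\widetilde\Lambda^{\rm robust}_t\succeq\lambda\,\mathrm{I}\succ 0$. The only substantive difference is the final linear-algebra step: the paper bounds the ratio $\phi^\top(\widetilde\Lambda^{\rm variant}_t)^{-1}\phi\,/\,\phi^\top\widetilde\Lambda_t^{-1}\phi$ from below by $1+\lambda_{\min}\bigl(\widetilde\Lambda^{\rm robust}_t(\widetilde\Lambda^{\rm variant}_t)^{-1}\bigr)>1$ via the generalized Rayleigh quotient, whereas you use antitonicity of matrix inversion on the positive-definite cone ($A\succ B\succ 0\Rightarrow B^{-1}\succ A^{-1}$). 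The two are equivalent for establishing the strict inequality; the Rayleigh-quotient route yields a slightly more quantitative statement (a multiplicative gap of at least $1+\lambda_{\min}(\cdot)$), while yours is the more elementary and standard tool. Your write-up is in fact more careful on two points the paper glosses over: the paper simply \emph{assumes} $\widetilde\Lambda^{\rm in}_t+\widetilde\Lambda^{\rm ood}_t$ is positive definite, whereas you explicitly handle the singular case via a ridge term or a pseudoinverse/unbounded-LCB reading; and you flag that $\phi(s^\star,a^\star)\neq 0$ is needed for strictness, which the paper leaves implicit.
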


\begin{proof}
According to Lemma 1, since $\Lambda_t^{\rm robust}$ is positive-definite, we have $\Lambda_t^{\rm robust}\succeq \lambda I$ with a factor $\lambda > 0$. Meanwhile, the factor $\lambda$ can be large if we have sufficient adversarial samples and also with large variability in adversarial samples. By assuming $\widetilde \Lambda_t^{\rm in}+\widetilde \Lambda_t^{\rm ood}$ is positive definite and leveraging the properties of generalized Rayleigh quotient, we have
\begin{equation}
\nonumber
\begin{aligned}
\frac{\phi^\top (\widetilde \Lambda_t^{\rm variant})^{-1} \phi}{\phi^\top (\widetilde \Lambda_t^{\rm variant}+\widetilde \Lambda_t^{\rm robust})^{-1} \phi} &\geq \lambda_{\text{min}}\big((\widetilde \Lambda_t^{\rm variant}+\widetilde \Lambda_t^{\rm robust})(\widetilde \Lambda_t^{\rm variant})^{-1}\big) \\
&= 
\lambda_{\text{min}}\big( \mathrm{I} +(\widetilde \Lambda_t^{\rm robust})(\widetilde \Lambda_t^{\rm variant})^{-1}\big) \\&= 1 + \lambda_{\text{min}}\big((\widetilde \Lambda_t^{\rm robust})(\widetilde \Lambda_t^{\rm variant})^{-1}\big).
\end{aligned}
\end{equation}
Since $\widetilde \Lambda_t^{\rm robust}$ and $(\widetilde \Lambda_t^{\rm variant})^{-1}$ are both positive definite, the eigenvalues of $\widetilde \Lambda_t^{\rm robust}(\widetilde \Lambda_t^{\rm variant})^{-1}$ are all positive: $\lambda_{\text{min}}\big(\widetilde \Lambda_t^{\rm robust}(\widetilde \Lambda_t^{\rm variant})^{-1}\big) > 0$,
where $\lambda_{\rm min}(\cdot)$ is the minimum eigenvalue. 

Recall the uncertainty is calculated as $\Gamma^{\rm lcb}(s_t,a_t;\cD)= \beta_t \big[\phi(s_t,a_t)^\top\Lambda_t^{-1}\phi(s_t,a_t)\big]^{\nicefrac{1}{2}}$. Then for $\forall \phi(s^\star,a^\star)$, we have 
\begin{equation}
\begin{aligned}
\nonumber
\phi(s^\star,a^\star)^\top (\widetilde \Lambda_t^{\rm variant})^{-1} \phi(s^\star,a^\star)& >\phi(s^\star,a^\star)^\top (\widetilde \Lambda_t^{\rm variant}+\widetilde \Lambda_t^{\rm robust})^{-1} \phi(s^\star,a^\star)\\&=\phi(s^\star,a^\star)^\top (\widetilde \Lambda_t)^{-1} \phi(s^\star,a^\star),
\end{aligned}
\end{equation}
which means that 
$\Gamma^{\rm lcb}(s^{\star},a^{\star};\cD_{\rm variant})>\Gamma^{\rm lcb}(s^{\star},a^{\star};\cD_{\rm RO2O})$ and concludes our proof.
\end{proof}

As an extreme case in tabular MDPs where the states and actions are finite, the LCB-penalty takes a simpler form. Specifically, 
we consider the joint state-action space $D=|\mathcal{S}|\times|\mathcal{A}|$. Then $j$-th state-action pair can be encoded as a one-hot vector as $\phi(s,a)\in \mathbb{R}^{D}$, where $j\in [0,D-1]$. By considering the tabular MDP as a special case of the linear MDP \shortcite{yang2019sample,lsvi-2020}, we define
\begin{equation}
\phi(s_j,a_j)=
\begin{bmatrix}
\begin{smallmatrix}
0\\
\vdots\\
1\\
\vdots\\
0
\end{smallmatrix}
\end{bmatrix}
\in \mathbb{R}^D,
\qquad
\phi(s_j,a_j)\phi(s_j,a_j)^{\top}=
\begin{bmatrix}
\nonumber
\begin{smallmatrix}
0      & \cdots & 0 & \cdots & 0\\
\vdots & \ddots &   &        & \vdots \\
0      &        & 1 &        & 0\\
\vdots &        &   & \ddots & \vdots \\
0      & \cdots & 0 & \cdots & 0\\
\end{smallmatrix}
\end{bmatrix}
\in \mathbb{R}^{D\times D},
\end{equation}
where the value of $\phi(s_j,a_j)$ is $1$ at the $j$-th entry and $0$ elsewhere. Then the matrix $\Lambda_j=\sum_{i=0}^{m}\phi(s_j^i,a_j^i)\phi(s_j^{i},a_j^{i})^\top$ is a specific covariance matrix based on the learned datasets. It takes the form of
\begin{equation}
\Lambda_j=
\begin{bmatrix}
\nonumber
\begin{smallmatrix}
n_0    & 0      &        & \cdots              & 0      \\
0      & n_1    &        & \cdots              & 0      \\
\vdots &        & \ddots &                     & \vdots \\
0      & \cdots &        & \!\!\!\!\!\!\!\!n_j & 0      \\
\vdots &        &        & \:\:\:\:\:\:\ddots  & \vdots \\
0      & \cdots &        & \cdots              & n_{d-1}
\end{smallmatrix}
\end{bmatrix},
\end{equation}
where the $j$-th diagonal element of $\Lambda_j$ is the corresponding counts for state-action $(s_j,a_j)$, i.e., 
\[n_j=N_{s_j,a_j}.\] 
It thus holds that 
\begin{equation}\label{app::count}
\big[\phi(s_j,a_j)^{\top}\Lambda_j^{-1}\phi(s_j,a_j)\big]^{\nicefrac{1}{2}}=\frac{1}{\sqrt{N_{s_j,a_j}}}.
\end{equation}

For the variant algorithm of RO2O in Equation~\eqref{eq:app_variant}, since the value function is learned from $\cD_{\rm variant}$, the counting function also counts from this dataset. However, without any constraints, the count for a purely OOD state-action pair $(s^{\star},a^{\star})$ can approach zero, and thus $\Gamma^{\rm lcb}(s^{\star},a^{\star};\cD_{\rm variant})\to \infty$ according to Equation~\eqref{app::count}. In contrast, as we proved in Lemma~\ref{tm:PD_matrix}, the covariance matrix $\widetilde \Lambda_t^{\rm robust}$ for smoothness constraints is positive-definite as $\widetilde \Lambda_t^{\rm robust} \succeq \lambda \cdot \mathrm{I}$ where $\lambda > 0$. Then the covariance matrix for RO2O as $\widetilde \Lambda_t \succeq \lambda \cdot \mathrm{I}$ since $\widetilde \Lambda_t=\widetilde \Lambda_t^{\rm variant}+\widetilde \Lambda_t^{\rm robust}$. Then, we have 
$\big[\phi(s_j,a_j)^{\top}\Lambda_j^{-1}\phi(s_j,a_j)\big]^{\nicefrac{1}{2}} < \nicefrac{1}{\sqrt{\lambda}}$ and thus $\Gamma^{\rm lcb}(s^{\star},a^{\star};\cD_{\rm RO2O})\leq \nicefrac{\beta_t}{\sqrt{\lambda}}$ with $\lambda>0$. As a result, RO2O is more robust to significant distribution shift theoretically. 

\subsection{Sub-optimality Gap of RO2O}

To quantify the sub-optimality gap, we start by the following lemma to show the ensemble $Q$-networks used in RO2O can recover the LCB term in linear MDPs.

\begin{lemma}[Equivalence between LCB-penalty and Ensemble Uncertainty] We assume that the noise in linear regression follows the standard Gaussian, then it holds for the posterior of $w$ given $\cD_i$ that
\begin{equation}
\nonumber
\VV_{\hw}[Q_i(s,a)]=\VV_{\hw}\bigl(\phi(s, a)^\top \hw \bigr) = \phi(s, a)^\top \Lambda^{-1} \phi(s, a), \quad \forall (s, a)\in\cS\times\cA.
\end{equation}
\end{lemma}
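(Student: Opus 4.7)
The plan is to cast the linear regression in a Bayesian framework and read off the posterior covariance, then use standard propagation-of-variance for a linear functional. First, I would write down the statistical model implied by LSVI under the standard Gaussian noise assumption: given $\cD_i = \{(s^j, a^j, y^j)\}_{j=1}^m$, the observation model is $y^j = \phi(s^j,a^j)^\top w + \epsilon^j$ with $\epsilon^j \sim \mathcal{N}(0,1)$ i.i.d. This yields the log-likelihood proportional to $-\tfrac{1}{2}\sum_j (y^j - \phi(s^j,a^j)^\top w)^2$, which is precisely the LSVI objective in Equation~\eqref{eq::appendix_OOD_LSVI}.

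Next, I would specify a (flat/improper, or equivalently the limit of a Gaussian with infinite prior variance) prior on $w$ so that the posterior is dominated by the likelihood. Completing the square in $w$ inside the quadratic form shows that the posterior $p(w \mid \cD_i)$ is Gaussian with mean $\hw = \Lambda^{-1}\sum_{j}\phi(s^j,a^j)y^j$ — coinciding with the LSVI solution — and precision matrix $\Lambda = \sum_j \phi(s^j,a^j)\phi(s^j,a^j)^\top$. Hence $w \mid \cD_i \sim \mathcal{N}(\hw, \Lambda^{-1})$.

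Finally, because $Q_i(s,a) = \phi(s,a)^\top w$ is a fixed linear functional of the random vector $w$, the elementary identity $\VV(\alpha^\top X) = \alpha^\top \mathrm{Cov}(X)\alpha$ applied with $\alpha = \phi(s,a)$ and $\mathrm{Cov}(w) = \Lambda^{-1}$ immediately yields
\begin{equation}
\nonumber
\VV_{\hw}\bigl(\phi(s,a)^\top \hw\bigr) = \phi(s,a)^\top \Lambda^{-1} \phi(s,a),
\end{equation}
which is the claimed identity, matching (up to the factor $\beta_t$) the LCB-penalty $\Gamma^{\rm lcb}(s,a;\cD)$. The main obstacle is being careful about the prior: since the lemma does not state one, I would make explicit that the result is obtained under an (improper) flat prior, or equivalently as the limit of a Gaussian prior with vanishing precision, and note that this is the standard convention under which the posterior mean reduces to the LSVI estimator and the posterior covariance equals $\Lambda^{-1}$. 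A minor technicality is ensuring $\Lambda$ is invertible, which can be handled either by assuming full-rank features or by introducing an infinitesimal ridge regularization that does not affect the stated identity in the limit.
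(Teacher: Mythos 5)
Your proof is correct and follows essentially the same route as the paper's source: the paper does not write out a proof at all but simply defers to Lemma 1 of PBRL, whose argument is exactly the Bayesian linear-regression computation you give (Gaussian likelihood matching the LSVI objective, posterior $w\mid\cD_i\sim\mathcal{N}(\hw,\Lambda^{-1})$, then $\VV(\phi^\top w)=\phi^\top\Lambda^{-1}\phi$). Your explicit caveats about the flat-prior limit and the invertibility of $\Lambda$ are appropriate and, if anything, make the argument more careful than the citation the paper relies on.
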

\begin{proof}
We refer to the proof in Lemma 1 of \shortcite{pbrl}. 
\end{proof}

In RO2O, we choose the minimum value among ensemble $Q$-networks (i.e., $\min Q_i$) as the learning target, which is equivalent to the uncertainty penalty as $i.e., \bar{Q}-\alpha \sqrt{\VV (Q_i)}$ with a specific $\alpha$ \shortcite{SAC-N}. The following theorem shows RO2O can consistently reduce the sub-optimality gap with online fine-tuning.

\begin{theorem}
\label{the:the2}
Under the same conditions as Theorem~\ref{thm:distribution_shift2}, with additional online experience in the fine-tuning phase, the sub-optimality gap holds for RO2O in linear MDPs, as
\begin{equation}
\begin{aligned}
\label{eq:app_opt_gap}
{\rm SubOpt} (\pi^*, \widetilde\pi) &\leq \sum\nolimits_{t=1}^{T} \mathbb{E}_{\pi^*} \big[ \Gamma^{\rm lcb}(s_t,a_t;\cD_{\rm RO2O}\cup 
\cD_{
\rm online
}) \big] \\
&\leq \sum\nolimits_{t=1}^{T} \mathbb{E}_{\pi^*} \big[ \Gamma^{\rm lcb}(s_t,a_t;\cD_{\rm RO2O}) \big],
\end{aligned}
\end{equation}
where $\widetilde\pi$ and $\pi^{*}$ are the learned policy and the optimal policy in $\cD_{\rm RO2O}\cup 
\cD_{
\rm online
}$, respectively.
\label{thm:optimality}
\end{theorem}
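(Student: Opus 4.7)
The plan is to derive the first inequality from the pessimistic value iteration (PEVI) framework of Jin et al.~(2021), and then obtain the second inequality from a monotonicity argument on the covariance matrix, re-using the reasoning already developed for Theorem~\ref{thm:distribution_shift2}.

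For the first inequality, I would follow the standard PEVI decomposition. Because the loss in Equation~\eqref{eq:app_simplified_problem} yields the closed-form solution $\widetilde w_t$ in Equation~\eqref{eq::app_w_ood_solu}, and because (by the Equivalence Lemma between LCB-penalty and ensemble uncertainty) taking the minimum over the ensemble $Q$-networks amounts to subtracting a bonus proportional to $[\phi(s,a)^\top \Lambda_t^{-1} \phi(s,a)]^{1/2}$, the learned value function is a pessimistic estimator with penalty $\Gamma^{\rm lcb}$. The first step is then to verify that $\Gamma^{\rm lcb}(\,\cdot\,;\cD_{\rm RO2O}\cup\cD_{\rm online})$ is a valid $\xi$-uncertainty quantifier, i.e., that with high probability $|\widehat{\cT}_t \widetilde V_{t+1}(s,a) - \cT_t \widetilde V_{t+1}(s,a)| \leq \Gamma^{\rm lcb}(s,a;\cD_{\rm RO2O}\cup\cD_{\rm online})$ for every $(s,a)\in\cS\times\cA$. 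Once this is in place, the PEVI sub-optimality decomposition (Theorem~4.2 of Jin et al.~2021) directly produces $\mathrm{SubOpt}(\pi^*, \widetilde\pi) \leq \sum_{t=1}^{T}\mathbb{E}_{\pi^*}[\Gamma^{\rm lcb}(s_t,a_t;\cD_{\rm RO2O}\cup\cD_{\rm online})]$.

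For the second inequality, I would argue by data monotonicity of the covariance matrix. Adding online samples gives $\widetilde\Lambda_t(\cD_{\rm RO2O}\cup\cD_{\rm online}) = \widetilde\Lambda_t(\cD_{\rm RO2O}) + \Lambda_t^{\rm online}$, where $\Lambda_t^{\rm online}=\sum_{(s,a)\in\cD_{\rm online}}\phi(s,a)\phi(s,a)^\top$ is positive semi-definite. Since matrix inversion is operator-antitone on the cone of positive-definite matrices, $\widetilde\Lambda_t(\cD_{\rm RO2O}\cup\cD_{\rm online})^{-1}\preceq \widetilde\Lambda_t(\cD_{\rm RO2O})^{-1}$, so for every $\phi$ we have $\phi^\top \widetilde\Lambda_t(\cD_{\rm RO2O}\cup\cD_{\rm online})^{-1}\phi \leq \phi^\top \widetilde\Lambda_t(\cD_{\rm RO2O})^{-1}\phi$. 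Taking square roots, multiplying by $\beta_t$, taking expectation under $\pi^*$, and summing over $t$ closes the chain. This is structurally the same Rayleigh-quotient argument that drove the proof of Theorem~\ref{thm:distribution_shift2}, with $\cD_{\rm online}$ now playing the role previously played by $\cD_{\rm robust}$.

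The main obstacle I expect is establishing that the LCB penalty is indeed a valid uncertainty quantifier once the smoothness and OOD regularizers perturb the usual LSVI regression, because $\widetilde w_t$ is no longer the pure in-sample least-squares solution. The key observation to get past this is that the smoothness term contributes only a positive-definite perturbation to the covariance matrix (Lemma~\ref{tm:PD_matrix}) while leaving the Bellman target unaffected, since it only compares $Q$-values at paired $(s,a)$ and $(\hat s,a)$, so a standard self-normalized concentration inequality still applies after rescaling $\beta_t$. A secondary subtlety is that $\cD_{\rm online}$ is collected adaptively by the updating policy; this is handled by the same martingale/self-normalized bounds used throughout the linear MDP literature, which only require the feature covariance matrix to be suitably conditioned rather than i.i.d.\ sampling.
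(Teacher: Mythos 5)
Your proposal follows essentially the same route as the paper's proof: the first inequality comes from the PEVI sub-optimality decomposition after arguing that $\Gamma^{\rm lcb}(\cdot;\cD_{\rm RO2O}\cup\cD_{\rm online})$ is a valid $\xi$-uncertainty quantifier (with the ensemble-minimum linked to the LCB bonus via the equivalence lemma), and the second comes from the monotonicity $\widetilde\Lambda_t(\cD_{\rm RO2O}\cup\cD_{\rm online})\succeq\widetilde\Lambda_t(\cD_{\rm RO2O})$ and the antitonicity of matrix inversion. If anything, you are more explicit than the paper about the two subtleties (validity of the quantifier under the perturbed regression and adaptively collected online data), which the paper dispatches by citing ``mild assumptions'' from RORL.
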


\begin{proof}
Based on the LSVI solution of $\widetilde w_{\rm offline}$, we consider importing additional dataset $\cD_{\rm finetune}$ in online interactions. Following a similar solution procedure as in 
Equation~\eqref{eq:app_simplified_problem} via LSVI, we obtain the solution of RO2O with online dataset as 
\begin{equation}
\nonumber
\widetilde w^{\rm R2O2}_t =(\widetilde{ \Lambda}_t^{\rm RO2O})^{-1} \Big( \sum_{(s,a,y)\sim \cD_{\rm offline}\cup \cD_{\rm finetune}} \phi(s,a) y + \sum_{(\hat s, \hat a, \hat y) \sim \widehat{\mathcal{D}}_{\text{ood}} }\phi(\hat s,\hat a)  \hat y  \Big), 
\end{equation}
where $\widehat{\mathcal{D}}_{\rm ood}$ is a new OOD dataset that contains OOD samples of both the offline and online data. The new covariance matrix $\widetilde{\Lambda}^{\rm RO2O}_t$ is calculated on samples in both online and offline data,
\begin{equation}
\begin{aligned}
\label{eq:app_covariance-rorl-online}
\widetilde{\Lambda}^{\rm RO2O}_t &=\widetilde \Lambda_t + \widetilde \Lambda^{\rm online}_t\\&=\sum_{\cD_{\rm offline}\cup \cD_{\rm finetune}} \phi(s_t^i,a_t^i)\phi(s_t^i,a_t^i)^\top+\sum_{\widehat{\cD}_{\text{ood}}} \phi(\hat s_t,\hat a_t)\phi(\hat s_t,\hat a_t)^\top \\& + 
\sum_{i=1}^{m} \frac{1}{|\mathbb{B}_d(s_t^i,\epsilon)|}  \sum_{\widehat{\cD}_{\text{robust}}} \big[\phi(\hat s_t^i,a_t^i) - \phi(s_t^i,a_t^i)\big]\big[\phi(\hat s_t^i,a_t^i) - \phi(s_t^i,a_t^i)\big]^\top,
\end{aligned}
\end{equation}
where each term is calculated on both the offline dataset and online fine-tuning dataset since the proposed RO2O algorithm does not change the learning objective in the offline-to-online process. We denote the total data used in online fine-tuning as $\cD_{\rm online}$, which contains the $\cD_{\rm finetune}$ collected in interacting with the environment, the additional adversarial samples, and the OOD samples that are constructed based on $\cD_{\rm finetune}$. Then, $\widetilde{\Lambda}^{\rm RO2O}_t$ is the covariance matrix of samples from both offline and online datasets, i.e., $\cD_{\rm RO2O}\cup \cD_{\rm online}$.

According to the theoretical framework of pessimistic value-iteration \shortcite{jin2021pessimism}, value iteration with LCB-based uncertainty penalty is provable efficient in offline RL. Based on the covariance matrix of RO2O, the LCB-term 
of RO2O learning in offline pre-training and online-fine-tuning are
\begin{align}
\Gamma^{\rm lcb}(s_t,a_t;\cD_{\rm RO2O})&=\beta_t [\phi(s_t, a_t)^\top (\widetilde{\Lambda}_t)^{-1} \phi(s_t, a_t)]^{\nicefrac{1}{2}},
\label{eq:app_lcb_offline}
\\ \quad 
{\rm and\quad}\Gamma^{\rm lcb}(s_t,a_t;\cD_{\rm RO2O}\cup 
\cD_{
\rm online
})&=\beta_t [\phi(s_t, a_t)^\top (\widetilde{\Lambda}^{\rm RO2O}_t)^{-1} \phi(s_t, a_t)]^{\nicefrac{1}{2}},
\label{eq:app_lcb_total}
\end{align}
respectively, where $\widetilde{\Lambda}^{\rm RO2O}_t$ is defined in Equation~\eqref{eq:app_covariance-rorl-online}. According to the definition of $\xi$-uncertainty quantifier \shortcite{lsvi-2020}, $\Gamma^{\rm lcb}(s_t,a_t;\cD_{\rm RO2O}\cup 
\cD_{
\rm online
})$ also forms a valid $\xi$-uncertainty quantifier under mild assumptions \shortcite{rorl}. According to \shortcite{jin2021pessimism}, since $\Gamma^{\rm lcb}(s_t,a_t;\cD_{\rm RO2O})$ is a valid $\xi$-uncertainty quantifier, the first inequality of Equation~\eqref{eq:app_opt_gap} holds in quantifying the sub-optimality gap. Further, since $\widetilde{\Lambda}^{\rm RO2O}_t \succeq \widetilde{\Lambda}_t$ according to Equation~\ref{eq:app_covariance-rorl-online}, we have $\Gamma^{\rm lcb}(s_t,a_t;\cD_{\rm RO2O}\cup 
\cD_{
\rm online
}) \leq \Gamma^{\rm lcb}(s_t,a_t;\cD_{\rm RO2O})$ by following Equation \ref{eq:app_lcb_total} and \ref{eq:app_lcb_offline}, which concludes our proof.
\end{proof}

\begin{definition}[$\xi$-Uncertainty Quantifier]
The set of penalization $\{\Gamma_t\}_{t\in[T]}$ forms a $\xi$-Uncertainty Quantifier if it holds with probability at least $1 - \xi$ that
\begin{equation*}
|\widehat \cT V_{t+1}(s, a) - \cT V_{t+1}(s, a)| \leq \Gamma_t(s, a)
\end{equation*}
for all $(s, a)\in\cS\times\cA$, where $\cT$ is the Bellman equation and $\widehat \cT$ is the empirical Bellman equation that estimates $\cT$ based on the offline data.
\end{definition}

Following PBRL \shortcite{pbrl} and RORL \shortcite{rorl} that adopt ensemble disagreement as the uncertainty quantifier, in linear MDPs, the proposed ensemble uncertainty $\beta_t\cdot\mathcal{U}(s_t,a_t)$ is an estimation to the LCB-penalty $\Gamma^{\rm lcb}(s_t,a_t)$ for an appropriately selected tuning parameter $\beta_t$. As a result, our method enjoys a similar form of optimality gap in PEVI.

Further, since our method adopts additional OOD sampling and smooth constraints, the covariance matrix in calculating $\Gamma^{\rm lcb}(s_t,a_t;\mathcal{D}_{\rm RO2O})$ for our method becomes
\begin{equation*}
\Gamma^{\rm lcb}(s_t,a_t;\mathcal{D}_{\rm RO2O})=\beta_t [\phi(s_t, a_t)^\top (\widetilde{\Lambda}_t)^{-1} \phi(s_t, a_t)]^{\frac{1}{2}},
\end{equation*}
where
\begin{equation*}
\tilde{\Lambda}_t=\tilde{\Lambda}_t^{\rm in}+\tilde{\Lambda}_t^{\rm ood}+\tilde{\Lambda}_t^{\rm robust},
\end{equation*}
which also serves as a $\xi$-uncertainty quantifier. Then the uncertainty term for RO2O is $\Gamma_{i}^{\rm lcb}(s_t,a_t;\mathcal{D}_{\rm RO2O})$ in offline setting, and in online exploration it becomes $\Gamma_{i}^{\rm LCB}(s_t,a_t;\mathcal{D}_{\rm RO2O}\cup \mathcal{D}_{\rm online})$. Further, our theoretical analysis in Theorem \ref{the:the2} shows that
\begin{equation*}
{\rm SubOpt} (\pi^*, \widetilde\pi) \leq \sum\nolimits_{t=1}^{T} \mathbb{E}_{\pi^*} \big[ \Gamma_i^{\rm lcb}(s_t,a_t;\mathcal{D}_{\rm RO2O}\cup 
\mathcal{D}_{
\rm online
}) \big] \leq \sum\nolimits_{t=1}^{T} \mathbb{E}_{\pi^*} \big[ \Gamma_i^{\rm lcb}(s_t,a_t;\mathcal{D}_{\rm RO2O}) \big],
\end{equation*}
which signifies the online exploration data can consistently reduce the sub-optimality gap of our method with $\xi$-uncertainty quantification.

\section{Environmental Settings}
\label{app:environment}
In this section, we introduce more details of the experimental environments. 
\paragraph{MuJoCo Locomotion}
We conduct experiments on three MuJoCo locomotion tasks from D4RL\shortcite{d4rl}, namely HalfCheetah, Walker2d, and Hopper. The goal of each task is to move forward as far as possible without falling, while keeping the control cost minimal. For each task, we consider three types of datasets. The medium datasets contain trajectories collected by medium-level policies. The medium-replay datasets encompass all samples collected during the training of a medium-level agent from scratch. In the case of the medium-expert datasets, half of the data comprises rollouts from medium-level policies, while the other half consists of rollouts from expert-level policies. In this study, we exclude the random datasets, as in typical real-world scenarios, we rarely use a random policy for system control. We utilize the v2 version of each dataset. For offline phase, We train agents for 2.5M gradient steps over all datasets with an ensemble size of $N = 10$. Then we run online fine-tuning for an additional 250K environment interactions.
\paragraph{Antmaze Navigation}
We also evaluate our method on the Antmaze navigation tasks that involve controlling an 8-DoF ant quadruped robot to navigate through mazes and reach a desired goal. The agent receives binary rewards based on whether it successfully reaches the goal or not. We study each method using the following datasets from D4RL \shortcite{d4rl}: large-diverse, large-play, medium-diverse, medium-play, umaze-diverse, and umaze. The difference between diverse and play datasets is the optimality of the trajectories they contain. The diverse datasets consist of trajectories directed towards random goals from random starting points, whereas the play datasets comprise trajectories directed towards specific locations that may not necessarily correspond to the goal. We use the v2 version of each dataset. For offline phase, We train agents for 1M gradient steps over all datasets with an ensemble size of $N = 10$. Then we run online fine-tuning for an additional 250K environment interactions.

\section{Implementation Details}
\label{app:implement}
In this section, we introduce implementation details and hyper-parameters for each task.

\paragraph{MuJoCo Locomotion}
We select PEX, AWAC, SAC, SPOT, Cal-QL and IQL as our baselines in mujoco locomotion tasks. For SAC, AWAC, SPOT, Cal-QL and IQL, we use the implementation from CORL\footnote{https://github.com/tinkoff-ai/CORL} with default hyperparameters. For PEX, we use the open-source code of the original paper\footnote{https://github.com/Haichao-Zhang/PEX}. 
To compare the fine-tuning performance of the algorithms under limited online interactions, we reduce the number of online interaction steps from the previous 1M to 250K. All the hyper-parameters used in RO2O for the benchmark experiments are listed in Table~\ref{Hyper in D4RL}. $\eta_1$, $\eta_2$, $\eta_3$ indicate the coefficient of the $Q$-network smoothing loss $\mathcal{L}_{\text{Qsmooth}}$, ood loss $\mathcal{L}_{\text{ood}}$ and the policy smoothing loss $\mathcal{L}_{\text{policy}}$, respectively, where $\eta_1$ maintains a constant value of 0.0001, $\eta_2$ is tuned within \{0.0, 0.1, 0.5\} and $\eta_3$ is searched in \{0.1, 1.0\}. Additionally, for the above three losses, we construct a perturbation set $\mathbb{B}_d(s,\epsilon)=\{\hat{s}:d(s,\hat{s})\leq \epsilon\}$ by setting different epsilons $\epsilon$. We denote the perturbation scales for the Q value functions, the policy, and the OOD loss as $\epsilon_Q$, $\epsilon_P$, $\epsilon_{ood}$. $\tau$ is set to control the weight of $\mathcal{L}_{\text{Qsmooth}}$ which maintains a constant value of 0.2. The number of sampled perturbed observations n is set for tuning within \{10, 20\}. And $\alpha$ is set to control the pessimistic degree of $\mathcal{L}_{\text{ood}}$ during the pre-trained phase. Moreover, discarding offline data buffer is adopted in RO2O, which exhibits benefits for stable transfer in our experiments and mitigates the distributional shift.

\begin{table*}[ht]
\centering
\tiny
\caption{Hyperparameters of RO2O for the MuJoCo domains.}
\label{Hyper in D4RL}
\resizebox{\textwidth}{!}{
\begin{tabular}{l|c|c|l|c|c|c|c|c|c}
\cline{1-10}
\textbf{Task Name} & $\eta_1$ & $\eta_2$ & \multicolumn{1}{c|}{$\eta_3$} & $\epsilon_{\mathrm{Q}}$ & $\epsilon_{\mathrm{P}}$ & $\epsilon_{\text {ood}}$ & $\tau$ & $n$ & $\alpha$ \\ \cline{1-10}
halfcheetah-medium                  & \multirow{4}{*}{0.0001} & \multirow{4}{*}{0.0} & \multirow{4}{*}{0.1}          & 0.001                   & 0.001                   & \multirow{4}{*}{0.00}    & \multirow{4}{*}{0.2} & 10                  & \multirow{4}{*}{0}                \\
halfcheetah-medium-replay           &                         &                      &                               & 0.001                   & 0.001                   &                          &                      & 10                  &                                   \\
halfcheetah-medium-expert           &                         &                      &                               & 0.001                   & 0.001                   &                          &                      & 10                  &                                   \\
halfcheetah-expert                  &                         &                      &                               & 0.005                   & 0.005                   &                          &                      & 10                  &                                   \\ \hline
hopper-medium                       & \multirow{4}{*}{0.0001} & \multirow{4}{*}{0.5} & \multirow{4}{*}{0.1}          & \multirow{4}{*}{0.005}  & \multirow{4}{*}{0.005}  & \multirow{4}{*}{0.01}    & \multirow{4}{*}{0.2} & \multirow{4}{*}{20} & 2.0 $\rightarrow$ 0.1 ($1e^{-6}$) \\
hopper-medium-replay                &                         &                      &                               &                         &                         &                          &                      &                     & 0.1 $\rightarrow$ 0.0 ($1e^{-6}$) \\
hopper-medium-expert                &                         &                      &                               &                         &                         &                          &                      &                     & 3.0 $\rightarrow$ 1.0 ($1e^{-6}$) \\
hopper-expert                       &                         &                      &                               &                         &                         &                          &                      &                     & 4.0 $\rightarrow$ 1.0 ($1e^{-6}$) \\ \hline
walker2d-medium                     & \multirow{4}{*}{0.0001} & 0.1                  & \multirow{4}{*}{1.0}          & 0.01                    & 0.01                    & \multirow{4}{*}{0.01}    & \multirow{4}{*}{0.2} & \multirow{4}{*}{20} & 1.0 $\rightarrow$ 0.1 ($5e^{-7}$) \\
walker2d-medium-replay              &                         & 0.1                  &                               & 0.01                    & 0.01                    &                          &                      &                     & 0.1 $\rightarrow$ 0.1 (0.0)       \\
walker2d-medium-expert              &                         & 0.1                  &                               & 0.01                    & 0.01                    &                          &                      &                     & 0.1 $\rightarrow$ 0.1 (0.0)       \\
walker2d-expert                     &                         & 0.5                  &                               & 0.005                   & 0.005                   &                          &                      &                     & 1.0 $\rightarrow$ 0.5 ($1e^{-6}$) \\ \hline
\end{tabular}}
\end{table*}


\paragraph{Antmaze Navigation}
We select PEX, SPOT and Cal-QL as our baselines in antmaze navigation tasks. For SPOT and Cal-QL, we use the implementation provided by CORL with default hyperparameters. We directly used the experimental results provided by CORL in weight \& bias for comparison. For PEX, we use the open-source code of the original paper. To compare the fine-tuning performance of the algorithms under limited online interactions, we reduce the number of online interaction steps from the previous 1M to 250K. We found that incorporating behavior cloning (BC) during the offline pre-training phase of the AntMaze task can effectively improve model performance. Additionally, making appropriate adjustments to BC during the online fine-tuning phase for certain tasks can also enhance the algorithm's performance and stability. And we transform AntMaze rewards according to $4(r - 0.5)$ as per MSG\shortcite{msg} or CQL\shortcite{cql}. All the hyper-parameters used in RO2O for the benchmark experiments are listed in Table \ref{Hyper in antmaze}. $\beta_{\text{BC, off}}$ and $\beta_{\text{BC, on}}$ indicate the weight of BC regularization on policy networks during offline pre-training and online fine-tuning, respectively. The LCB policy objective and `Min' policy objective represent optimizing the policy network using $\rm Mean(Q_{\theta_i}(s,a))-\rm Std(Q_{\theta_i}(s,a))$ or $\min\limits_{i} Q_{\theta_i}(s,a)$, respectively. And the meanings of other parameters remain consistent with Table~\ref{Hyper in D4RL} under the Mujoco tasks.

\begin{table*}[ht]
\centering
\tiny
\caption{Hyper-parameters of RO2O for the AntMaze domains.}
\label{Hyper in antmaze}
\resizebox{\linewidth}{!}{
\begin{tabular}{l|c|c|c|c|c|c|c|c|c}
\hline 
\textbf{Task Name} & $\eta_2$ & $\eta_3$ & $\epsilon_{\mathrm{P}}$ & $\epsilon_{\text {ood}}$ & $n$ & policy objective & $\beta_{\text{BC, off}}$ & $\beta_{\text{BC, on}}$ & $\alpha$ \\
\hline 
umaze & & 0.3 & & & & LCB & 5 & 5 & 1.0 $\rightarrow$ 1.0 (0.0) \\
umaze-diverse & & 0.3 & & & & LCB & 10 & 20 & 2.0 $\rightarrow$ 2.0 (0.0) \\
medium-play & \multirow{2}{*}{1.0} & 0.3 & \multirow{2}{*}{0.005} & \multirow{2}{*}{0.01} & \multirow{2}{*}{20} & LCB & 2 & 2 & 1.0 $\rightarrow$ 1.0 (0.0) \\
medium-diverse & & 0.3 & & & & LCB & 4 & 4 & 2.0 $\rightarrow$ 1.0 ($1e^{-6}$) \\
large-play & & 0.5 & & & & Min & 2 & 8 & 2.0 $\rightarrow$ 1.0 ($1e^{-6}$) \\
large-diverse & & 0.3 & & & & Min & 2 & 8 & 1.0 $\rightarrow$ 1.0 (0.0)\\
\hline
\end{tabular}}
\end{table*}

\section{More Discussion}
\label{more discussion}
\paragraph{Using different learning target for different tasks} Most of the ensemble-based RL algorithms use shared pessimistic target values when computing each ensemble member’s Bellman error. However, the results reported in the reference \shortcite{msg,rorl} and our experiments demonstrate that using independent target surpasses highly well-tuned state-of-the-art methods by a wide margin on challenging domains such as Antmaze. We believe there are several reasons: (i) Antmaze navigation tasks are more complex than Mujoco locomotion tasks. Since there is significant distribution shift between online interactions and offline data, it will be challenging to learn effective policies by relying solely on policies derived from offline data. Due to shared TD target is too pessimistic, agents tend to avoid accessing a significant number of ODD samples, thereby limiting exploration to some extent. This also results in methods like PBRL \shortcite{pbrl} and EDAC \shortcite{SAC-N}, which utilize shared TD targets, performing poorly on tasks such as Antmaze. (ii) In contrast, the disparity between in-distribution and OOD policies is not obvious in Mujoco tasks. Therefore, directly applying shared targets to achieve pessimistic updates in Mujoco tasks ensures pessimism while also capturing the uncertainty near the distribution. Therefore, we refer to the independent target used in the Q-value Bellman update by MSG \shortcite{msg}.

\paragraph{Comparison to RORL}
Here, we discuss the differences between RORL \shortcite{rorl} and our method from several perspectives. (i) \textbf{Motivation.} The motivation of robustness constraints in RORL is to improve the smoothness of policy and Q-functions in facing adversarial attacks. In contrast, our method focuses on offline-to-online settings, where robustness regularization is used to prevent the distribution shift of OOD data in online exploration. We highlight that both RORL and our method adopt the same smooth value function/policy originally proposed in online exploration, while the motivations for introducing robustness in our method and RORL are quite different. (ii) From a \textbf{theoretical} perspective, we provide new theoretical results in Theorem \ref{thm:optimality}, which analyzes the optimality gap of the learned policy in online exploration with additional online datasets. With uncertainty quantification and smoothness constraints, our method benefits from more online exploration data without suffering from distribution shifts, which is crucial for offline-to-online RL. Our theoretical result shows the optimality gap of our method shrinks if the online exploration data increases the data coverage of the optimal policy, which is significantly different from previous offline-to-online methods that should penalize OOD data in online exploration. (iii) \textbf{Empirically}, our method obtains strong performance without a specially designed online adaptation process. The offline-to-online performance does not drop when interacting with the online environment, which is consistent with our theoretical analysis. Benefiting from the theoretical result, our method can perform efficient policy improvement without specific modifications to the learning architecture in the offline-to-online process.

\bibliography{RO2O}

\begin{thebibliography}{}

\bibitem[\protect\BCAY{An, Moon, Kim,\ \BBA\ Song}{An et~al.}{2021}]{SAC-N}
An, G., Moon, S., Kim, J.-H., \BBA\ Song, H.~O. \BBOP2021\BBCP.
\newblock \BBOQ Uncertainty-based offline reinforcement learning with diversified {Q}-ensemble\BBCQ\
\newblock {\Bem Advances in neural information processing systems}, {\Bem 34}, 7436--7447.

\bibitem[\protect\BCAY{Bai, Wang, Yang, Deng, Garg, Liu,\ \BBA\ Wang}{Bai et~al.}{2022}]{pbrl}
Bai, C., Wang, L., Yang, Z., Deng, Z.-H., Garg, A., Liu, P., \BBA\ Wang, Z. \BBOP2022\BBCP.
\newblock \BBOQ Pessimistic bootstrapping for uncertainty-driven offline reinforcement learning\BBCQ\
\newblock In {\Bem International Conference on Learning Representations}.

\bibitem[\protect\BCAY{Berner, Brockman, Chan, Cheung, Debiak, Dennison, Farhi, Fischer, Hashme, Hesse, J{\'{o}}zefowicz, Gray, Olsson, Pachocki, Petrov, de~Oliveira~Pinto, Raiman, Salimans, Schlatter, Schneider, Sidor, Sutskever, Tang, Wolski,\ \BBA\ Zhang}{Berner et~al.}{2019}]{dota}
Berner, C., Brockman, G., Chan, B., Cheung, V., Debiak, P., Dennison, C., Farhi, D., Fischer, Q., Hashme, S., Hesse, C., J{\'{o}}zefowicz, R., Gray, S., Olsson, C., Pachocki, J., Petrov, M., de~Oliveira~Pinto, H.~P., Raiman, J., Salimans, T., Schlatter, J., Schneider, J., Sidor, S., Sutskever, I., Tang, J., Wolski, F., \BBA\ Zhang, S. \BBOP2019\BBCP.
\newblock \BBOQ Dota 2 with large scale deep reinforcement learning\BBCQ\
\newblock {\Bem CoRR}, {\Bem abs/1912.06680}.

\bibitem[\protect\BCAY{Chen, Sidor, Abbeel,\ \BBA\ Schulman}{Chen et~al.}{2017}]{ucbexploration}
Chen, R.~Y., Sidor, S., Abbeel, P., \BBA\ Schulman, J. \BBOP2017\BBCP.
\newblock \BBOQ Ucb exploration via {Q}-ensembles\BBCQ\
\newblock {\Bem CoRR}, {\Bem abs/1706.01502}.

\bibitem[\protect\BCAY{Chen, Wang, Zhou,\ \BBA\ Ross}{Chen et~al.}{2021}]{redq}
Chen, X., Wang, C., Zhou, Z., \BBA\ Ross, K.~W. \BBOP2021\BBCP.
\newblock \BBOQ Randomized ensembled double {Q}-learning: Learning fast without a model\BBCQ\
\newblock In {\Bem International Conference on Learning Representations}.

\bibitem[\protect\BCAY{Fu, Kumar, Nachum, Tucker,\ \BBA\ Levine}{Fu et~al.}{2020}]{d4rl}
Fu, J., Kumar, A., Nachum, O., Tucker, G., \BBA\ Levine, S. \BBOP2020\BBCP.
\newblock \BBOQ {D4RL:} datasets for deep data-driven reinforcement learning\BBCQ\
\newblock {\Bem CoRR}, {\Bem abs/2004.07219}.

\bibitem[\protect\BCAY{Fujimoto\ \BBA\ Gu}{Fujimoto\ \BBA\ Gu}{2021}]{td3bc}
Fujimoto, S.\BBACOMMA\  \BBA\ Gu, S.~S. \BBOP2021\BBCP.
\newblock \BBOQ A minimalist approach to offline reinforcement learning\BBCQ\
\newblock {\Bem Advances in neural information processing systems}, {\Bem 34}, 20132--20145.

\bibitem[\protect\BCAY{Fujimoto, Hoof,\ \BBA\ Meger}{Fujimoto et~al.}{2018}]{td3}
Fujimoto, S., Hoof, H., \BBA\ Meger, D. \BBOP2018\BBCP.
\newblock \BBOQ Addressing function approximation error in actor-critic methods\BBCQ\
\newblock In {\Bem International conference on machine learning}, \BPGS\ 1587--1596. PMLR.

\bibitem[\protect\BCAY{Fujimoto, Meger,\ \BBA\ Precup}{Fujimoto et~al.}{2019}]{BCQ}
Fujimoto, S., Meger, D., \BBA\ Precup, D. \BBOP2019\BBCP.
\newblock \BBOQ Off-policy deep reinforcement learning without exploration\BBCQ\
\newblock In {\Bem International conference on machine learning}, \BPGS\ 2052--2062. PMLR.

\bibitem[\protect\BCAY{Ghasemipour, Gu,\ \BBA\ Nachum}{Ghasemipour et~al.}{2022}]{msg}
Ghasemipour, K., Gu, S.~S., \BBA\ Nachum, O. \BBOP2022\BBCP.
\newblock \BBOQ Why so pessimistic? estimating uncertainties for offline {RL} through ensembles, and why their independence matters\BBCQ\
\newblock {\Bem Advances in Neural Information Processing Systems}, {\Bem 35}, 18267--18281.

\bibitem[\protect\BCAY{Haarnoja, Zhou, Abbeel,\ \BBA\ Levine}{Haarnoja et~al.}{2018}]{sac}
Haarnoja, T., Zhou, A., Abbeel, P., \BBA\ Levine, S. \BBOP2018\BBCP.
\newblock \BBOQ Soft actor-critic: Off-policy maximum entropy deep reinforcement learning with a stochastic actor\BBCQ\
\newblock In {\Bem International conference on machine learning}, \BPGS\ 1861--1870. PMLR.

\bibitem[\protect\BCAY{Hessel, Modayil, Van~Hasselt, Schaul, Ostrovski, Dabney, Horgan, Piot, Azar,\ \BBA\ Silver}{Hessel et~al.}{2018}]{rainbow}
Hessel, M., Modayil, J., Van~Hasselt, H., Schaul, T., Ostrovski, G., Dabney, W., Horgan, D., Piot, B., Azar, M., \BBA\ Silver, D. \BBOP2018\BBCP.
\newblock \BBOQ Rainbow: Combining improvements in deep reinforcement learning\BBCQ\
\newblock In {\Bem Proceedings of the AAAI conference on artificial intelligence}, \lowercase{\BVOL}~32.

\bibitem[\protect\BCAY{Jin, Yang, Wang,\ \BBA\ Jordan}{Jin et~al.}{2020}]{lsvi-2020}
Jin, C., Yang, Z., Wang, Z., \BBA\ Jordan, M.~I. \BBOP2020\BBCP.
\newblock \BBOQ Provably efficient reinforcement learning with linear function approximation\BBCQ\
\newblock In {\Bem Conference on Learning Theory}, \BPGS\ 2137--2143. PMLR.

\bibitem[\protect\BCAY{Jin, Yang,\ \BBA\ Wang}{Jin et~al.}{2021}]{jin2021pessimism}
Jin, Y., Yang, Z., \BBA\ Wang, Z. \BBOP2021\BBCP.
\newblock \BBOQ Is pessimism provably efficient for offline {RL}?\BBCQ\
\newblock In {\Bem International Conference on Machine Learning}, \BPGS\ 5084--5096. PMLR.

\bibitem[\protect\BCAY{Kiran, Sobh, Talpaert, Mannion, Al~Sallab, Yogamani,\ \BBA\ P{\'e}rez}{Kiran et~al.}{2021}]{autodriving}
Kiran, B.~R., Sobh, I., Talpaert, V., Mannion, P., Al~Sallab, A.~A., Yogamani, S., \BBA\ P{\'e}rez, P. \BBOP2021\BBCP.
\newblock \BBOQ Deep reinforcement learning for autonomous driving: A survey\BBCQ\
\newblock {\Bem IEEE Transactions on Intelligent Transportation Systems}, {\Bem 23\/}(6), 4909--4926.

\bibitem[\protect\BCAY{Kostrikov, Nair,\ \BBA\ Levine}{Kostrikov et~al.}{2022}]{iql}
Kostrikov, I., Nair, A., \BBA\ Levine, S. \BBOP2022\BBCP.
\newblock \BBOQ Offline reinforcement learning with implicit {Q}-learning\BBCQ\
\newblock In {\Bem ICLR}. OpenReview.net.

\bibitem[\protect\BCAY{Kumar, Zhou, Tucker,\ \BBA\ Levine}{Kumar et~al.}{2020}]{cql}
Kumar, A., Zhou, A., Tucker, G., \BBA\ Levine, S. \BBOP2020\BBCP.
\newblock \BBOQ Conservative {Q}-learning for offline reinforcement learning\BBCQ\
\newblock {\Bem Advances in Neural Information Processing Systems}, {\Bem 33}, 1179--1191.

\bibitem[\protect\BCAY{Lambert, Wulfmeier, Whitney, Byravan, Bloesch, Dasagi, Hertweck,\ \BBA\ Riedmiller}{Lambert et~al.}{2022}]{challenges}
Lambert, N., Wulfmeier, M., Whitney, W., Byravan, A., Bloesch, M., Dasagi, V., Hertweck, T., \BBA\ Riedmiller, M. \BBOP2022\BBCP.
\newblock \BBOQ The challenges of exploration for offline reinforcement learning\BBCQ\
\newblock {\Bem CoRR}, {\Bem abs/2201.11861}.

\bibitem[\protect\BCAY{Lan, Pan, Fyshe,\ \BBA\ White}{Lan et~al.}{2020}]{maxminQ}
Lan, Q., Pan, Y., Fyshe, A., \BBA\ White, M. \BBOP2020\BBCP.
\newblock \BBOQ Maxmin {Q}-learning: Controlling the estimation bias of {Q}-learning\BBCQ\
\newblock {\Bem CoRR}, {\Bem abs/2002.06487}.

\bibitem[\protect\BCAY{Lange, Gabel,\ \BBA\ Riedmiller}{Lange et~al.}{2012}]{BatchRL}
Lange, S., Gabel, T., \BBA\ Riedmiller, M. \BBOP2012\BBCP.
\newblock \BBOQ Batch reinforcement learning\BBCQ\
\newblock In {\Bem Reinforcement learning: State-of-the-art}, \BPGS\ 45--73. Springer.

\bibitem[\protect\BCAY{Lee, Laskin, Srinivas,\ \BBA\ Abbeel}{Lee et~al.}{2021}]{sunrise}
Lee, K., Laskin, M., Srinivas, A., \BBA\ Abbeel, P. \BBOP2021\BBCP.
\newblock \BBOQ Sunrise: A simple unified framework for ensemble learning in deep reinforcement learning\BBCQ\
\newblock In {\Bem International Conference on Machine Learning}, \BPGS\ 6131--6141. PMLR.

\bibitem[\protect\BCAY{Lee, Seo, Lee, Abbeel,\ \BBA\ Shin}{Lee et~al.}{2022}]{balancedreplay}
Lee, S., Seo, Y., Lee, K., Abbeel, P., \BBA\ Shin, J. \BBOP2022\BBCP.
\newblock \BBOQ Offline-to-online reinforcement learning via balanced replay and pessimistic {Q}-ensemble\BBCQ\
\newblock In {\Bem Conference on Robot Learning}, \BPGS\ 1702--1712. PMLR.

\bibitem[\protect\BCAY{Mnih, Kavukcuoglu, Silver, Rusu, Veness, Bellemare, Graves, Riedmiller, Fidjeland, Ostrovski, et~al.}{Mnih et~al.}{2015}]{natureDQN}
Mnih, V., Kavukcuoglu, K., Silver, D., Rusu, A.~A., Veness, J., Bellemare, M.~G., Graves, A., Riedmiller, M., Fidjeland, A.~K., Ostrovski, G., et~al. \BBOP2015\BBCP.
\newblock \BBOQ Human-level control through deep reinforcement learning\BBCQ\
\newblock {\Bem nature}, {\Bem 518\/}(7540), 529--533.

\bibitem[\protect\BCAY{Nair, Gupta, Dalal,\ \BBA\ Levine}{Nair et~al.}{2020}]{awac}
Nair, A., Gupta, A., Dalal, M., \BBA\ Levine, S. \BBOP2020\BBCP.
\newblock \BBOQ {AWAC}: Accelerating online reinforcement learning with offline datasets\BBCQ\
\newblock {\Bem CoRR}, {\Bem abs/2006.09359}.

\bibitem[\protect\BCAY{Nakamoto, Zhai, Singh, Mark, Ma, Finn, Kumar,\ \BBA\ Levine}{Nakamoto et~al.}{2023}]{cal-ql}
Nakamoto, M., Zhai, Y., Singh, A., Mark, M.~S., Ma, Y., Finn, C., Kumar, A., \BBA\ Levine, S. \BBOP2023\BBCP.
\newblock \BBOQ Cal-{QL}: Calibrated offline {RL} pre-training for efficient online fine-tuning\BBCQ\
\newblock {\Bem CoRR}, {\Bem abs/2303.05479}.

\bibitem[\protect\BCAY{Osband, Blundell, Pritzel,\ \BBA\ Van~Roy}{Osband et~al.}{2016}]{bootstrappedDQN}
Osband, I., Blundell, C., Pritzel, A., \BBA\ Van~Roy, B. \BBOP2016\BBCP.
\newblock \BBOQ Deep exploration via bootstrapped {DQN}\BBCQ\
\newblock {\Bem Advances in neural information processing systems}, {\Bem 29}.

\bibitem[\protect\BCAY{Raffin, Hill, Gleave, Kanervisto, Ernestus,\ \BBA\ Dormann}{Raffin et~al.}{2021}]{stablebaselines3}
Raffin, A., Hill, A., Gleave, A., Kanervisto, A., Ernestus, M., \BBA\ Dormann, N. \BBOP2021\BBCP.
\newblock \BBOQ Stable-baselines3: Reliable reinforcement learning implementations\BBCQ\
\newblock {\Bem Journal of Machine Learning Research}, {\Bem 22\/}(268), 1--8.

\bibitem[\protect\BCAY{Schneegass, Udluft,\ \BBA\ Martinetz}{Schneegass et~al.}{2008}]{uncertainty}
Schneegass, D., Udluft, S., \BBA\ Martinetz, T. \BBOP2008\BBCP.
\newblock \BBOQ Uncertainty propagation for quality assurance in reinforcement learning\BBCQ\
\newblock In {\Bem 2008 IEEE International Joint Conference on Neural Networks (IEEE World Congress on Computational Intelligence)}, \BPGS\ 2588--2595. IEEE.

\bibitem[\protect\BCAY{Schulman, Levine, Abbeel, Jordan,\ \BBA\ Moritz}{Schulman et~al.}{2015}]{TRPO}
Schulman, J., Levine, S., Abbeel, P., Jordan, M., \BBA\ Moritz, P. \BBOP2015\BBCP.
\newblock \BBOQ Trust region policy optimization\BBCQ\
\newblock In {\Bem International conference on machine learning}, \BPGS\ 1889--1897. PMLR.

\bibitem[\protect\BCAY{Schulman, Wolski, Dhariwal, Radford,\ \BBA\ Klimov}{Schulman et~al.}{2017}]{ppo}
Schulman, J., Wolski, F., Dhariwal, P., Radford, A., \BBA\ Klimov, O. \BBOP2017\BBCP.
\newblock \BBOQ Proximal policy optimization algorithms\BBCQ\
\newblock {\Bem CoRR}, {\Bem abs/1707.06347}.

\bibitem[\protect\BCAY{Schweighofer, Dinu, Radler, Hofmarcher, Patil, Bitto-Nemling, Eghbal-zadeh,\ \BBA\ Hochreiter}{Schweighofer et~al.}{2022}]{dataset}
Schweighofer, K., Dinu, M.-c., Radler, A., Hofmarcher, M., Patil, V.~P., Bitto-Nemling, A., Eghbal-zadeh, H., \BBA\ Hochreiter, S. \BBOP2022\BBCP.
\newblock \BBOQ A dataset perspective on offline reinforcement learning\BBCQ\
\newblock In {\Bem Conference on Lifelong Learning Agents}, \BPGS\ 470--517. PMLR.

\bibitem[\protect\BCAY{Shen, Li, Jiang, Wang,\ \BBA\ Zhao}{Shen et~al.}{2020}]{sr2l}
Shen, Q., Li, Y., Jiang, H., Wang, Z., \BBA\ Zhao, T. \BBOP2020\BBCP.
\newblock \BBOQ Deep reinforcement learning with robust and smooth policy\BBCQ\
\newblock In {\Bem International Conference on Machine Learning}, \BPGS\ 8707--8718. PMLR.

\bibitem[\protect\BCAY{Silver, Hubert, Schrittwieser, Antonoglou, Lai, Guez, Lanctot, Sifre, Kumaran, Graepel, et~al.}{Silver et~al.}{2018}]{alphazero}
Silver, D., Hubert, T., Schrittwieser, J., Antonoglou, I., Lai, M., Guez, A., Lanctot, M., Sifre, L., Kumaran, D., Graepel, T., et~al. \BBOP2018\BBCP.
\newblock \BBOQ A general reinforcement learning algorithm that masters chess, shogi, and go through self-play\BBCQ\
\newblock {\Bem Science}, {\Bem 362\/}(6419), 1140--1144.

\bibitem[\protect\BCAY{Sinha, Mandlekar,\ \BBA\ Garg}{Sinha et~al.}{2022}]{s4rl}
Sinha, S., Mandlekar, A., \BBA\ Garg, A. \BBOP2022\BBCP.
\newblock \BBOQ S4rl: Surprisingly simple self-supervision for offline reinforcement learning in robotics\BBCQ\
\newblock In {\Bem Conference on Robot Learning}, \BPGS\ 907--917. PMLR.

\bibitem[\protect\BCAY{Swazinna, Udluft,\ \BBA\ Runkler}{Swazinna et~al.}{2021}]{MOOSE}
Swazinna, P., Udluft, S., \BBA\ Runkler, T. \BBOP2021\BBCP.
\newblock \BBOQ Overcoming model bias for robust offline deep reinforcement learning\BBCQ\
\newblock {\Bem Engineering Applications of Artificial Intelligence}, {\Bem 104}, 104366.

\bibitem[\protect\BCAY{Tobin, Fong, Ray, Schneider, Zaremba,\ \BBA\ Abbeel}{Tobin et~al.}{2017}]{domainrandom}
Tobin, J., Fong, R., Ray, A., Schneider, J., Zaremba, W., \BBA\ Abbeel, P. \BBOP2017\BBCP.
\newblock \BBOQ Domain randomization for transferring deep neural networks from simulation to the real world\BBCQ\
\newblock In {\Bem 2017 IEEE/RSJ international conference on intelligent robots and systems (IROS)}, \BPGS\ 23--30. IEEE.

\bibitem[\protect\BCAY{Uchendu, Xiao, Lu, Zhu, Yan, Simon, Bennice, Fu, Ma, Jiao, et~al.}{Uchendu et~al.}{2023}]{jsrl}
Uchendu, I., Xiao, T., Lu, Y., Zhu, B., Yan, M., Simon, J., Bennice, M., Fu, C., Ma, C., Jiao, J., et~al. \BBOP2023\BBCP.
\newblock \BBOQ Jump-start reinforcement learning\BBCQ\
\newblock In {\Bem International Conference on Machine Learning}, \BPGS\ 34556--34583. PMLR.

\bibitem[\protect\BCAY{Van~der Maaten\ \BBA\ Hinton}{Van~der Maaten\ \BBA\ Hinton}{2008}]{tsne}
Van~der Maaten, L.\BBACOMMA\  \BBA\ Hinton, G. \BBOP2008\BBCP.
\newblock \BBOQ Visualizing data using {t-SNE}.\BBCQ\
\newblock {\Bem Journal of machine learning research}, {\Bem 9\/}(11).

\bibitem[\protect\BCAY{Wu, Wu, Qiu, Wang,\ \BBA\ Long}{Wu et~al.}{2022}]{SPOT}
Wu, J., Wu, H., Qiu, Z., Wang, J., \BBA\ Long, M. \BBOP2022\BBCP.
\newblock \BBOQ Supported policy optimization for offline reinforcement learning\BBCQ\
\newblock {\Bem Advances in Neural Information Processing Systems}, {\Bem 35}, 31278--31291.

\bibitem[\protect\BCAY{Wu, Tucker,\ \BBA\ Nachum}{Wu et~al.}{2019}]{BARC}
Wu, Y., Tucker, G., \BBA\ Nachum, O. \BBOP2019\BBCP.
\newblock \BBOQ Behavior regularized offline reinforcement learning\BBCQ\
\newblock {\Bem CoRR}, {\Bem abs/1911.11361}.

\bibitem[\protect\BCAY{Yang\ \BBA\ Wang}{Yang\ \BBA\ Wang}{2019}]{yang2019sample}
Yang, L.\BBACOMMA\  \BBA\ Wang, M. \BBOP2019\BBCP.
\newblock \BBOQ Sample-optimal parametric {Q}-learning using linearly additive features\BBCQ\
\newblock In {\Bem International Conference on Machine Learning}, \BPGS\ 6995--7004. PMLR.

\bibitem[\protect\BCAY{Yang, Bai, Ma, Wang, Zhang,\ \BBA\ Han}{Yang et~al.}{2022}]{rorl}
Yang, R., Bai, C., Ma, X., Wang, Z., Zhang, C., \BBA\ Han, L. \BBOP2022\BBCP.
\newblock \BBOQ {RORL}: Robust offline reinforcement learning via conservative smoothing\BBCQ\
\newblock {\Bem Advances in Neural Information Processing Systems}, {\Bem 35}, 23851--23866.

\bibitem[\protect\BCAY{Yu, Liu, Nemati,\ \BBA\ Yin}{Yu et~al.}{2021}]{healthcare}
Yu, C., Liu, J., Nemati, S., \BBA\ Yin, G. \BBOP2021\BBCP.
\newblock \BBOQ Reinforcement learning in healthcare: A survey\BBCQ\
\newblock {\Bem ACM Computing Surveys (CSUR)}, {\Bem 55\/}(1), 1--36.

\bibitem[\protect\BCAY{Yu, Thomas, Yu, Ermon, Zou, Levine, Finn,\ \BBA\ Ma}{Yu et~al.}{2020}]{mopo}
Yu, T., Thomas, G., Yu, L., Ermon, S., Zou, J.~Y., Levine, S., Finn, C., \BBA\ Ma, T. \BBOP2020\BBCP.
\newblock \BBOQ {MOPO}: Model-based offline policy optimization\BBCQ\
\newblock {\Bem Advances in Neural Information Processing Systems}, {\Bem 33}, 14129--14142.

\bibitem[\protect\BCAY{Zhang, Xu,\ \BBA\ Yu}{Zhang et~al.}{2023}]{pex}
Zhang, H., Xu, W., \BBA\ Yu, H. \BBOP2023\BBCP.
\newblock \BBOQ Policy expansion for bridging offline-to-online reinforcement learning\BBCQ\
\newblock In {\Bem The Eleventh International Conference on Learning Representations}.

\bibitem[\protect\BCAY{Zhao, Ma, Liu, Jianye, Zheng,\ \BBA\ Meng}{Zhao et~al.}{2023}]{e2o}
Zhao, K., Ma, Y., Liu, J., Jianye, H., Zheng, Y., \BBA\ Meng, Z. \BBOP2023\BBCP.
\newblock \BBOQ Improving offline-to-online reinforcement learning with {Q}-ensembles\BBCQ\
\newblock In {\Bem ICML Workshop on New Frontiers in Learning, Control, and Dynamical Systems}.

\bibitem[\protect\BCAY{Zhao, Boney, Ilin, Kannala,\ \BBA\ Pajarinen}{Zhao et~al.}{2022}]{adaptivebc}
Zhao, Y., Boney, R., Ilin, A., Kannala, J., \BBA\ Pajarinen, J. \BBOP2022\BBCP.
\newblock \BBOQ Adaptive behavior cloning regularization for stable offline-to-online reinforcement learning\BBCQ\
\newblock In {\Bem {ESANN}}.

\end{thebibliography}
\bibliographystyle{theapa}

\end{document}